\def\munderbar#1{\underline{\sbox\tw@{$#1$}\dp\tw@\z@\box\tw@}}
\newtheorem{theorem}{Theorem}[section]
\newtheorem{remark}[theorem]{Remark}
\newtheorem{proposition}[theorem]{Proposition}
\newtheorem{assumption}[theorem]{Assumption}
\newcommand{\be}{\begin{equation}}
\newcommand{\ee}{\end{equation}}
\newcommand{\bea}{\begin{equation*}\begin{aligned}}
\newcommand{\eea}{\end{aligned}\end{equation*}}
\newcommand{\R}{\mathbb{R}}
\newcommand{\Max}{\max\limits_}
\newcommand{\Min}{\min\limits_}
\newcommand{\mc}{\mathcal}
\newcommand{\mbb}{\mathbb}
\newcommand{\inner}[2]{\big \langle #1, #2 \big \rangle }
\DeclareMathOperator{\st}{s.t.}
\newcommand{\Sym}{\mathbb{S}}
\newcommand{\PSD}{\mathbb{S}_{+}} 
\newcommand{\PD}{\mathbb{S}_{++}} 
\newcommand{\Let}{\triangleq}
\newcommand{\opt}{^\star}
\newcommand{\eps}{\varepsilon}
\newcommand{\half}{\frac{1}{2}}
\newcommand{\cmark}{\ding{51}}
\newcommand{\xmark}{\ding{55}}
\title{Cost-Adaptive Recourse Recommendation by Adaptive Preference Elicitation}
\author{Duy Nguyen\thanks{VinAI Research, Vietnam 
  (\email{v.duynk13@vinai.io})}
\and Bao Nguyen\thanks{VinUniversity
  (\email{bao.nn2@vinuni.edu.vn})}
\and Viet Anh Nguyen\thanks{The Chinese University of Hong Kong
  (\email{nguyen@se.cuhk.edu.hk})}}
\pgfplotsset{compat=1.18}
\begin{document}
\maketitle
\begin{abstract}
Algorithmic recourse recommends a cost-efficient action to a subject to reverse an unfavorable machine learning classification decision. Most existing methods in the literature generate recourse under the assumption of complete knowledge about the cost function. In real-world practice, subjects could have distinct preferences, leading to incomplete information about the underlying cost function of the subject. This paper proposes a two-step approach integrating preference learning into the recourse generation problem. In the first step, we design a question-answering framework to refine the confidence set of the Mahalanobis matrix cost of the subject sequentially. Then, we generate recourse by utilizing two methods: gradient-based and graph-based cost-adaptive recourse that ensures validity while considering the whole confidence set of the cost matrix. The numerical evaluation demonstrates the benefits of our approach over state-of-the-art baselines in delivering cost-efficient recourse recommendations.
\end{abstract}

\begin{keywords}
  Algorithmic Recourse, Preference Elicitation
\end{keywords}

\section{Introduction} \label{sec:intro}
Many machine learning algorithms are deployed to aid essential decisions in various domains. These decisions might have a direct or indirect influence on people's lives, especially in the case of high-profile applications~\citep{ref:verma2022counterfactual} such as job hiring~\citep{ref:harris2018making, ref:pessach2020employees}, bank loan~\citep{ref:wang2020comparative, ref:turkson2016machine} and medical diagnosis~\citep{ref:fatima2017survey, ref:latif2019medical}. Thus, it is imperative to develop methods to explain the prediction of machine learning models. For instance, when a person applies for a job and is rejected by a predictive model deployed by the employer, the applicant should be notified of the reasoning behind the unfavorable decision and what they could do to be hired in future applications. In a medical context, a machine learning model is utilized to predict whether or not a person will suffer from a stroke in the future using their current medical record and habits. If a person receives an undesirable health prediction from the model (e.g., high risk for stroke), it is essential to provide rationality for that diagnosis and additional medical guidance or lifestyle alternation to mitigate this condition. 

Recently, algorithmic recourse has become a powerful tool for explaining machine learning (ML) models. Recourse refers to the actions a person should take to achieve an alternative predicted outcome, and it is also known in the literature as a counterfactual, or prefactual, explanation. In the case of job hiring, recourse should be individualized suggestions such as ``get two more engineering certificates'' or ``complete one more personal project.'' Regarding the stroke prediction model, the recourse should be medical advice such as ``keep the ratio of sodium in the blood below 145 mEq/L'' or ``increase the water consumption per day up to 2 liters.''  When a company suggests a recourse to a job applicant, this recourse must be valid because the company should accept all applicants who completely implement the suggestions provided in the recommended recourse. A similar requirement holds for the health improvement recourse. Throughout this paper, we use ``subject'' to refer to the individual subject to the algorithm's prediction. In our job-hiring example, ``subject'' refers to the job applicant the company rejected.

Several approaches have been proposed to generate recourse for a machine learning model prediction~\citep{ref:karimi2022survey, ref:verma2022counterfactual, ref:steplin2021survey}.~\citet{ref:wachter2018counterfactual} used gradient information of the underlying model to generate a counterfactual closest to the input.~\citet{ref:ustun2019actionable} introduced an integer programming problem to find the minimal and actionable change for an input instance. \citet{ref:pawelczyk2020learning} leveraged the ideas from manifold learning literature to generate counterfactuals on the high-density data region. \citet{ref:karimi2020algorithmic, ref:karimi2021algorithmic} generated counterfactual as a sequence of interventions based on a pre-defined causal graph.

We provide an example of a gradient-based and a graph-based recourse to the stroke prediction example in Figure~\ref{fig:medical}. In the simplest form, a recourse dictates only the terminal state under which the algorithmic model will output a desirable prediction, see Figure~\ref{fig:medical}-left. A sequential recourse consists of multiple smaller actions that guide the subject toward a desirable terminal state, see Figure~\ref{fig:medical}-right. Observe that the terminal states differ due to the algorithms used to find the recourse in each setting. 
\begin{figure*}[!ht]
        \centering
        \includegraphics[width=0.9\linewidth]{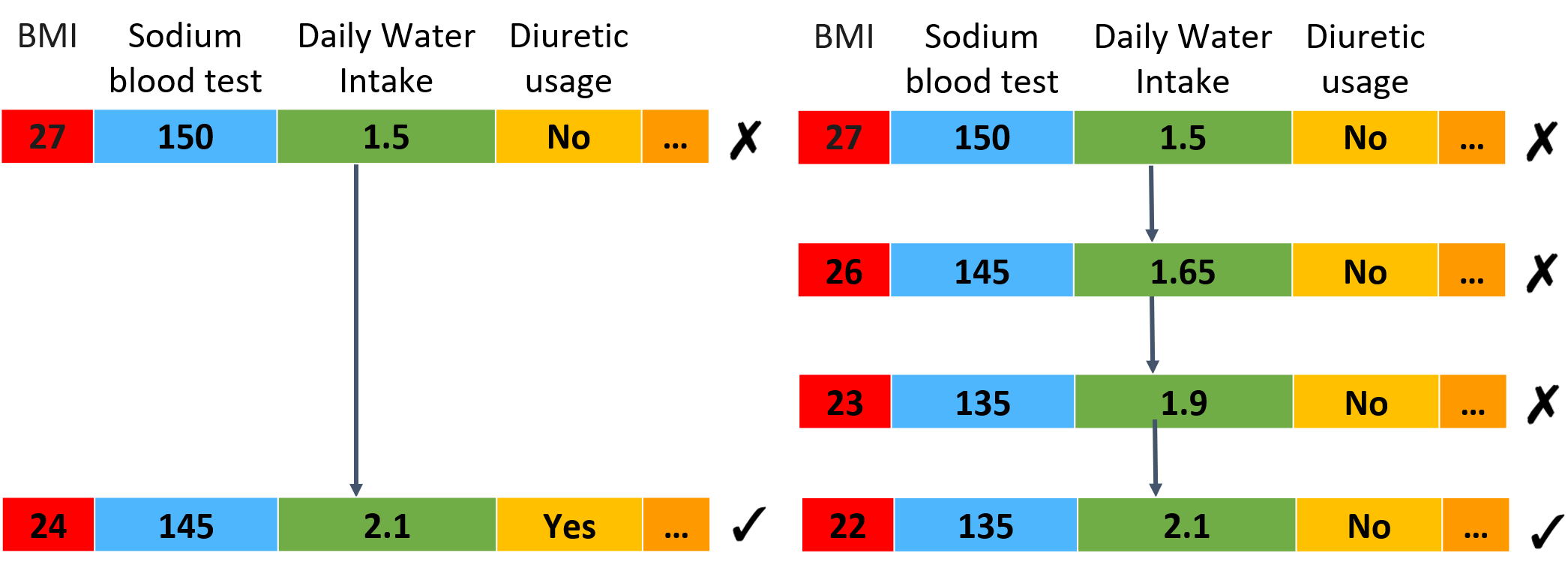}
        \caption{Example of a gradient-based one-step recourse recommendation (left) and a graph-based sequential recourse recommendation (right) on the stroke prediction example. \xmark \space denotes the unfavorable outcomes, and \cmark \space denotes the favorable outcomes.}
        \label{fig:medical}
\end{figure*}

These aforementioned approaches all assume that all subjects have the same cost function, for example, the $l_1$ distance~\citep{ref:ustun2019actionable, ref:upadhyay2021towards, ref:slack2021counterfactual, ref:ross2021learning} or define the same prior causal graph for all subjects~\citep{ref:karimi2020algorithmic, ref:karimi2021algorithmic}. This assumption results in two subjects with identical attributes receiving the same recourse recommendation. Unfortunately, this recourse recommendation is unrealistic in practice because having identical attributes does not necessarily guarantee that the two subjects will have identical preferences. Indeed, human preferences are strongly affected by many unobservable factors, including historical and societal experiences, which are hardly encoded in the attributes. Thus, the cost functions could be significantly different even between subjects with identical attributes, yet this difference is rarely considered in the recourse generation framework~\citep{ref:yadav2021low}. 

To mitigate these issues,~\citet{ref:toni2022generating} proposed a human-in-the-loop framework to generate a counterfactual explanation uniquely suited to each subject. The proposed method first fixes the initialized causal graph and iteratively learns the subject’s specific cost function. Recourse is generated by a reinforcement learning approach that searches for a suitable sequence of interventions. The disadvantage of this approach is that it requires a pre-defined causal graph, which is rarely available in practice~\citep{ref:verma2022counterfactual}. Besides,~\citet{ref:rawal2020beyond} employed the Bradley-Terry model to estimate a universal cost function and then utilized the user input to generate personalized recourse for the user. However, this method is additive in features; therefore, its ability to recover the underlying causal graph remains problematic. Following the same line of work, ~\citet{ref:yetukuri2023actionable} captures user preferences via three soft constraints: scoring continuous features, bounding feature values, and ranking categorical features. This method generates recourse via a gradient-based approach. However, the fractional-score concept for user preference might not be as straightforward, especially when the data has many continuous features.

To resolve these problems, we propose a preference elicitation framework that learns the subject's cost function from pairwise comparisons of possible recourses. Compared to~\citet{ref:toni2022generating}, our framework does not require the causal graph as input, and compared to~\citet{ref:rawal2020beyond} and~\citet{ref:yetukuri2023actionable}, our framework can perform well even when the dimension of the feature space grows large. This paper contributes by: 

\begin{itemize}[leftmargin=5mm]
\item proposing in Section~\ref{sec:cost-identify} an adaptive preference learning framework to learn the subject's cost function parametrized by the cost matrix of a Mahalanobis distance. This framework initializes with an uninformative confidence set of possible cost matrices. In each round, it determines the next question by finding a pair of recourses corresponding to the most effective cut of the confidence set, that is, a cut that slices the incumbent confidence set most aggressively. The incumbent confidence set shrinks along iterations. We terminate the questioning upon reaching a predefined number of inquiries. The final confidence set is employed for recourse generation.
\item proposing in Section~\ref{sec:generation} two methods for generating recourse under various assumptions of the machine learning models. These methods will consider explicitly the terminal confidence set about the subject's cost matrix. If the model is white-box and differentiable, we can use the cost-adaptive gradient-based recourse-generation method that generates cost-adaptive recourse. Otherwise, we can use the graph-based method to generate the sequential recourse.
\end{itemize}

In Section~\ref{sec:generalization}, we extend our framework to cope with potential inconsistencies in subject responses and extend the heuristics from pairwise comparison to multiple-option questions. Section~\ref{sec:experiment} reports our numerical results, which empirically demonstrate the benefit of our proposed approach. 

\textbf{Notations.} Given an integer $d$, we use $\mbb S^d$ and $\PSD^d$ to denote the space of $d$-by-$d$ symmetric matrices and $d$-by-$d$ symmetric positive definite matrices, respectively. The identity matrix is denoted by $I$. The inner product between two matrices $A, B \in \mbb S^d$ is $\inner{A}{B} =  \sum_{i,j} A_{ij} B_{ij}$, and we write $A \preceq B$ to denote that $B - A \in \PSD^d$. The set of integers from 1 to $N$ is $\llbracket N \rrbracket$. 

\section{Problem Statement and Solution Overview} \label{sec:overview}

We are given a binary classifier $\mc C_{\theta}: \R^d \to \{0, 1\}$ and access to the training dataset containing $N + M$ instances $x_i \in \R^d$, $i = 1, \ldots, N + M$. The dataset is split into two parts: 
\begin{itemize}[leftmargin=5mm]
    \item a positive dataset $\mc D_1=\{x_1,\ldots,x_N\}$ containing instances with $\mc C_{\theta}(x_i)=1~\forall x_i \in \mc D_1$.
    \item a negative dataset $\mc D_0=\{x_{N+1},\ldots,x_{N+M}\}$ containing all instances that have the negative predicted outcome, thus $\mc C_{\theta}(x_i)=0~\forall x_i \in \mc D_0$. 
\end{itemize}
Given a subject with input $x_0 \in \R^d$ with a negative predicted outcome $\mc C_{\theta}(x_0) = 0$, we make the following assumption on the cost function of this subject.
\begin{assumption}
    The subject $x_0$ has a Mahalanobis cost function of the form $c_{A_0}(x, x_0) = (x - x_0)^\top A_0 (x - x_0)$ for some symmetric, positive definite matrix $A_0 \in \PD^d$. 
\end{assumption}

We provide two possible justifications for the aforementioned assumption in Appendix~\ref{sec:motivation-M}. First, we describe a sequential control process that affects feature transitions of a subject $x_0$ towards a recourse $x_r$ while minimizing the cost of efforts. We formalize this problem as a Linear Quadratic Regulator, and then we prove that the optimal cost function has the Mahalanobis form, see Section~\ref{sec:LQR}. Second, Appendix~\ref{sec:causal} establishes a connection between the linear Gaussian structural causal model and the Mahalanobis cost function. We show that we can recover the Mahalanobis cost preference model with $A_0$ corresponding to the precision matrix of the deviation under linear Gaussian structural equation assumption.

In the above cost function, $A_0$ is the ground-truth matrix specific for subject $x_0$, but it remains elusive to the recourse generation framework. We aim to find $x_r$ which has a positive predicted outcome $\mc C_{\theta}(x_r) = 1$ and minimizes the cost $c_{A_0}(x_r, x_0)$. Because the matrix $A_0$ is unknown, we propose an adaptive preference learning approach~\citep{ref:bertsimas2013learning, ref:vayanos2020robust} to approximate the actual cost function $c_{A_0}(x, x_0)$. Our overall approach is as follows: We have a total of $T$ question-answer rounds for cost elicitation. We choose a pair $(x_i, x_j)$ from the positive dataset $\mc D_1$ in each round. We then ask the subject the following binary question: ``Between two possible recourses $x_i$ and $x_j$, which one do you prefer to implement?''. The answer from the subject takes one of the three answers: $x_i$ or $x_j$ or indifference. The subject's answer can be used to learn a binary preference relation $\mc P$. If $x_i$ is preferred to $x_j$, then we denote $x_i \mc P x_j$; if the subject is indifferent between $x_i$ and $x_j$, then we have simultaneously $x_i \mc P x_j$ and $x_j \mc P x_i$. Because both $x_i$ and $x_j$ have positive predicted outcomes, we assume that the subject's preference is solely based on which recourse requires less effort. Assume that $x_i \mc P x_j$, then $A_0$ should satisfy
\be \label{eq:A_0}
        (x_i - x_0)^\top A_0 (x_i - x_0) \le (x_j - x_0)^\top A_0 (x_j - x_0).
\ee
However, to model possible error in the judgment of the subject and to accommodate the indifference answer, we will equip a positive margin $\eps > 0$, and we have $x_i \mc P x_j$ if and only if:
\be \label{eq:ij}
         (x_i - x_0)^\top A_0 (x_i - x_0) \le (x_j - x_0)^\top A_0 (x_j - x_0) + \eps.
\ee
Let us denote the following matrix $M_{ij} \in \Sym^d$ as 
\be \label{eq:M}
        M_{ij} = x_i x_i^\top - x_j x_j^\top +  (x_j - x_i) x_0^\top +  x_0 (x_j - x_i)^\top,
\ee
then we can rewrite~\eqref{eq:ij} in the form $\inner{A_0}{M_{ij}} \le \eps$. Let $\mbb{P}$ be a set of ordered pairs representing the information collected so far about the preference of the subject:
    \[
    \mbb P = \left\{ (i, j) \in \llbracket N \rrbracket \times \llbracket N \rrbracket~:~x_i \mc P x_j \right\}.
    \]
For any preference set $\mbb P$, we can define $\mc U_{\mbb P}$ as the set of possible cost matrices $A$ that is consistent with the revealed preferences $\mbb P$:
\be \label{eq:U-def}
    \mc U_{\mbb P} \Let \{ A \in \PSD^d ~:~ \inner{A}{M_{ij} } \le \eps ~\forall (i, j) \in \mbb P\},
\ee
then at any time, we have $A_0 \in \mc U_{\mbb P}$. Thus, $\mc U_{\mbb P}$ is considered the confidence set of the cost matrix from the viewpoint of the recourse generation framework. Our learning framework aims to reduce the size of $\mc U_{\mbb P}$, hoping to pinpoint a small region where $A_0$ may reside. Afterward, we use a recourse generation method adapted to the confidence set $\mc U_{\mbb P}$.

We present the overall flow of our framework in Figure~\ref{fig:illustration}. In general, our framework addresses several questions of the cost-adaptive recourse-generation approach:
\begin{enumerate}[leftmargin =5mm]
    \item What are the questions to ask the subject? If $N$ is large, asking the subject exhaustively for $\mc O(N^2)$ pairwise comparisons is impossible. Thus, this question aims to find the pair $x_i$ and $x_j$ such that $(i, j) \notin \mbb P$ and $(j, i) \notin \mbb P$, and that adding either one of these two ordered pairs to $\mbb P$ will bring the largest amount of information as possible (in the sense of narrowing down the set $\mc U_{\mbb P}$).
    \item How to recommend a recourse $x_r$ that minimizes the cost, knowing the confidence set $\mc U_{\mbb P}$?
    \item What happens if there is inconsistency in the subject's preferences? For example, if there exist three distinct indices $(i, j, k)$ such that the subject states $x_i \mc P x_j$, $x_j \mc P x_k$ and $x_k \mc P x_i$. 
\end{enumerate}

The first and third questions are the fundamental questions in preference learning literature~\citep{ref:lu2021review, ref:bertsimas2013learning, ref:vayanos2020robust}. In the marketing literature~\citep{ref:toubia2003fast, ref:toubia2004polyhedral} or recommendation systems literature~\citep{ref:zhao2016user, ref:rashid2008learning, ref:pu2012evaluating}, the preference learning framework aims to recommend products that maximize the utility or preference of subjects. In the adaptive questionnaire framework, we would like to ask questions that give us the most information regardless of the response because the responses to each question are unknown. Moreover, we would like to select the next comparison questions to ask the subject that can maximize the acquired information and reduce the size of the confidence set as quickly as possible~\citep{ref:bertsimas2013learning, ref:vayanos2020robust}.

Guided by these ideas, we integrate the adaptive preference learning framework into the recourse generation problem. We show the overall flow of our framework in Figure~\ref{fig:framework}. Our approach generally consists of two phases: preference elicitation and recourse generation. Next, we present the preference elicitation phase in Section~\ref{sec:cost-identify} and recourse-generation methods in Section~\ref{sec:generation}.

\begin{figure*}[!ht]
        \centering
        \includegraphics[width=0.9\linewidth]{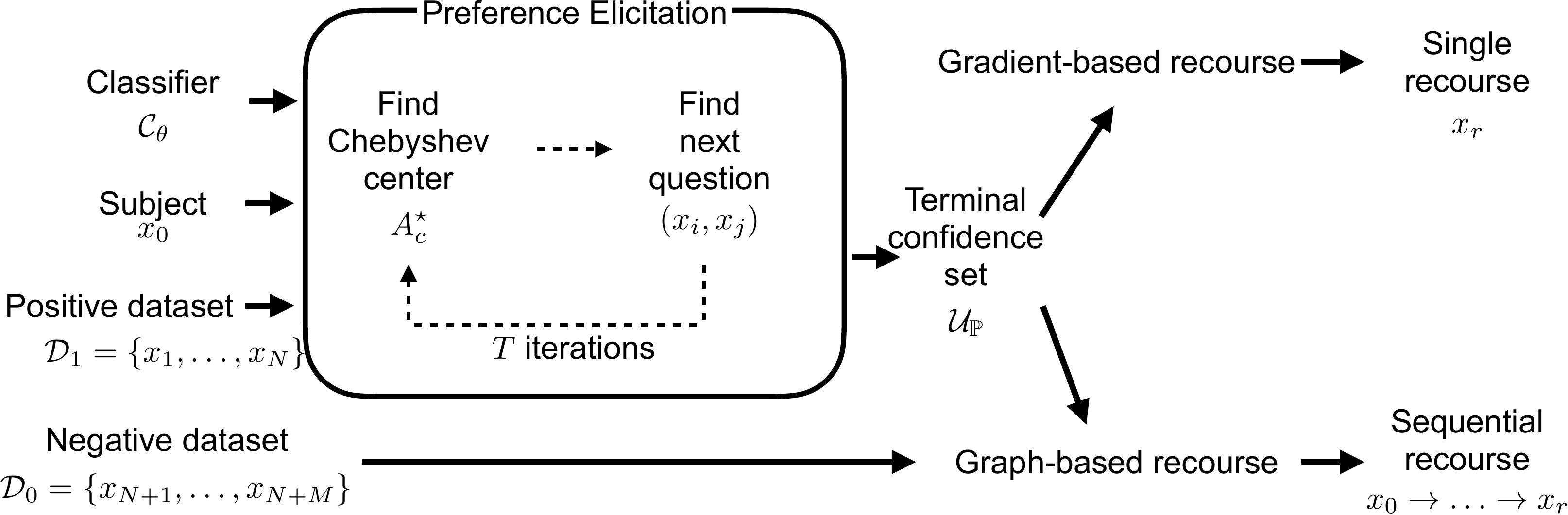}
        \caption{Overall flow of our cost-adaptive recourse recommendation framework. The subject inputs an instance $x_0$. In each of $T$ rounds of question-answer, we first find the Chebyshev center of the set $\mc U_{\mbb P}$, then select the next question that minimizes the distance to the Chebyshev center. We provide two methods to generate the cost-adaptive recourse: gradient-based and graph-based.}
        \label{fig:framework}
    \end{figure*}

\section{Cost Identification via Adaptive Pairwise Comparisons} \label{sec:cost-identify}

\subsection{Finding the Chebyshev Center} \label{sec:chebyshev}


\begin{figure}
    \centering
    \includegraphics[width=0.5\linewidth]{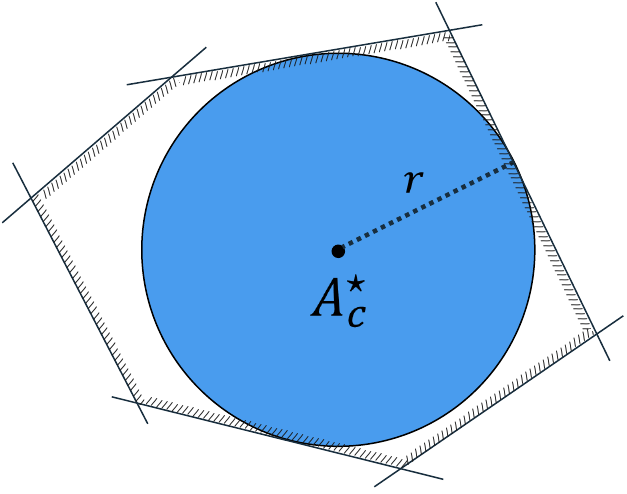}
    \caption{Illustration of the Chebyshev center. Black lines represent the hyperplanes $\inner{A}{M_{ij}} = \eps$ for $(i, j) \in \mbb P$ defining the boundaries of the polytope $\mathcal U_{\mbb P}$. The ball centered at the Chebyshev center $A_c\opt$ with radius $r$ is the largest inscribed ball of $\mathcal U_{\mathbb P}$.}
    \label{fig:illustration}
\end{figure}

First, we observe that without any loss of generality, we can impose an upper bound constraint $A \preceq I$ to the set $\mc U_{\mbb P}$. Indeed, the inequality~\eqref{eq:A_0} is invariant with any positive scaling of the matrix $A_0$, and thus, we can normalize $A_0$ so that it has a maximum eigenvalue of one. Adding $A \preceq I$ makes the set $\mc U_{\mbb P}$ bounded. Given a bounded set $\mc U_{\mbb P}$, we find the Chebyshev center of $\mc U_{\mbb P}$ for each question-answer round. Then, we find the question prescribing a hyperplane closest to this center; thus, this hyperplane can be considered the most aggressive cut. Notice that a question involving $x_i$ and $x_j$ can be represented by the hyperplane $\inner{A}{M_{ij}} = 0$. The confidence set $\mc U_{\mbb P}$ is simply a polytope in the space of positive definite matrices.

We first consider finding the Chebyshev center of the set $\mc U_{\mbb P}$. For any bounded set with a non-empty interior, the Chebyshev center is the center of a ball with the largest radius inside the set. Thus, given a confidence set $\mc U_{\mbb P}$, its Chebyshev center represents a safe point estimate of the true cost matrix. The Chebyshev center $A_c\opt$ and its corresponding radius $r\opt$ of $\mc U_{\mbb P}$ is the optimal solution of the problem


\[
(A_c\opt, r\opt)\!=\!\arg\max\limits_{ A_c \in \PSD^d,~r \in \R_+ }~\left\{ r ~:~ \|A - A_c \|_F^2 \le r^2 ~~ \forall A \in \mc U_{\mbb P}\right\}.
\]

For our problem, the Chebyshev center can be found by solving a semidefinite program resulting from the following theorem. 
\begin{theorem}[Chebyshev center] \label{thm:chebyshev}
Suppose that $\mc U_{\mbb P}$ has a non-empty interior. The Chebyshev center $A_c\opt$ of the set $\mc U_{\mbb P}$ can be found by solving the following semidefinite program 
\be \label{eq:chebyshev-sdp}
    \begin{array}{cll}
        \max & r \\
        \st & A_c \in \PSD^d,~r \in \R_+ \\
        & A_c \preceq I, \quad \inner{A_c}{M_{ij}} + r \| M_{ij} \|_F \le \eps  \quad \forall (i,j) \in \mbb P.
    \end{array}
\ee
\end{theorem}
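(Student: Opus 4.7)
The plan is to interpret the Chebyshev center as the center of the largest Frobenius-norm ball inscribed in $\mc U_{\mbb P}$, translate the ball-inclusion requirement into an algebraic robust-counterpart form for each linear cut defining $\mc U_{\mbb P}$, and then recognize the result as the stated SDP. Throughout, I use the fact (established in the excerpt) that after the normalization $A \preceq I$ the set $\mc U_{\mbb P} = \{A \in \PSD^d : A \preceq I,\ \inner{A}{M_{ij}} \le \eps\ \forall (i,j) \in \mbb P\}$ is bounded, with non-empty interior by assumption, so the Chebyshev center is well defined.

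First, I would rewrite the Chebyshev-center problem as
\[
\max_{A_c \in \PSD^d,\, r \ge 0}\; r \quad \st\quad A_c + B \in \mc U_{\mbb P}\ \text{for every}\ B \in \mbb S^d\ \text{with}\ \|B\|_F \le r,
\]
and then process each defining constraint of $\mc U_{\mbb P}$ separately. For a fixed pair $(i,j) \in \mbb P$, demanding $\inner{A_c + B}{M_{ij}} \le \eps$ for all admissible $B$ is equivalent to the single robust inequality
\[
\inner{A_c}{M_{ij}} + \sup_{B \in \mbb S^d,\, \|B\|_F \le r} \inner{B}{M_{ij}} \le \eps.
\]

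Second, I would compute the inner supremum via Cauchy--Schwarz on the Frobenius inner product: since $M_{ij} \in \mbb S^d$, the maximizer $B^\star = r M_{ij}/\|M_{ij}\|_F$ is symmetric and attains the value $r\|M_{ij}\|_F$. Substituting gives exactly the robust cut $\inner{A_c}{M_{ij}} + r\|M_{ij}\|_F \le \eps$ that appears in~\eqref{eq:chebyshev-sdp}. The conic side constraints $A_c \in \PSD^d$ and $A_c \preceq I$ in the SDP correspond to the requirement that the center itself lies in $\mc U_{\mbb P}$; these are retained as constraints on $A_c$ and not further robustified, which is the standard convention when writing a Chebyshev-center SDP for a spectrahedron cut out by affine inequalities.

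Collecting the constraints across all $(i,j) \in \mbb P$ and keeping $r \ge 0$ yields precisely~\eqref{eq:chebyshev-sdp}, so any optimizer $(A_c\opt, r\opt)$ of the SDP is a Chebyshev center–radius pair of $\mc U_{\mbb P}$. The only subtlety worth highlighting—and what I would treat as the main obstacle—is the careful separation between the finitely many affine constraints (which must be made robust over the entire Frobenius ball) and the conic constraints $0 \preceq A_c \preceq I$ (which are imposed only at the center). Once this bookkeeping is clarified, the argument reduces to one application of the Frobenius-ball support-function identity, so no delicate estimates are needed.
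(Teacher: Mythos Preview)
Your proposal is correct and follows essentially the same route as the paper: rewrite the ball-inclusion condition as a semi-infinite constraint over $\{\Delta\in\Sym^d:\|\Delta\|_F\le r\}$, take the supremum over $\Delta$ in each affine cut, and use the self-duality of the Frobenius norm (your Cauchy--Schwarz step) to obtain $r\|M_{ij}\|_F$. Your explicit remark that the conic constraints $0\preceq A_c\preceq I$ are imposed only at the center and not robustified is a point the paper's proof leaves implicit, so if anything you are slightly more careful here.
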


\begin{proof}{Proof of Theorem~\ref{thm:chebyshev}.}
Using the definition of the set $\mathcal U_{\mbb P}$ as in~\eqref{eq:U-consistent-def}, the optimization problem to find the Chebyshev center and its radius can be rewritten as
\[
    \begin{array}{cl}
    \max & r \\
     \st & A_c \in \PSD^d,~r \in \R_+ \\
     & \inner{A_c + \Delta}{M_{ij}} \le \eps ~\forall \Delta \in \mathcal B_r,~\forall (i,j) \in \mbb P.
    \end{array}
\]
    where $\mathcal B_r$ is a ball of symmetric matrices with Frobenius norm bounded by $r$: 
    \[
    \mathcal B_r = \{\Delta \in \Sym^d: \| \Delta \|_F \le r\}.
    \]
    Pick any $(i,j) \in \mbb P$, the semi-infinite constraint
    \[
    \inner{A_c + \Delta}{M_{ij}} \le \eps ~\forall \Delta \in \mathcal B_r
    \]
    is equivalent to the robust constraint
    \[
    \inner{A_c}{M_{ij}} + \sup\limits_{\|\Delta\|_F \le r} \inner{\Delta}{M_{ij}} \le \eps.
    \]
    Because the Frobenius norm is a self-dual norm, we have
    \[
    \sup\limits_{\|\Delta\|_F \le r} \inner{\Delta}{M_{ij}} = r \| M_{ij} \|_F.
    \]
    Replacing the above equation to the optimization problem completes the proof.
\end{proof}

\subsection{Recourse-Pair Determination} \label{sec:heuristics}

Finding the next question to ask the subject is equivalent to finding two indices $(i, j) \in \llbracket N \rrbracket \times \llbracket N \rrbracket$, corresponding to two recourses $x_i$ and $x_j$ in the positive dataset $\mc D_1$, such that the corresponding hyperplane $\inner{A}{M_{ij}} = 0$ is as close to the Chebyshev center $A_c\opt$ as possible. This is equivalent to solving the minimization problem
\[
    \min\limits_{(i,j) \in \llbracket N \rrbracket \times \llbracket N \rrbracket}~\frac{|\inner{A_c\opt}{M_{ij}}|}{\| M_{ij} \|_F},
\]
where the matrix $M_{ij}$ is calculated as in~\eqref{eq:M}. The objective function of the above problem is simply the projection distance of $A_c\opt$ to $\inner{A}{M_{ij}} = 0$ under the Frobenius norm. 

\textbf{Similar cost heuristics.} An exhaustive search over all pairs of indices $(i,j)$ requires an $\mc O(N^2)$ complexity. This search may become too expensive for large datasets because we must conduct one separate search at each round. We propose a heuristic that can produce reasonable questions in a limited time to alleviate this burden. This heuristics is based on the following observation: given an incumbent Chebyshev center $A_c\opt$, two valid recourses $x_i$ and $x_j$ are more comparable to each other if their resulting costs measured with respect to $A_c\opt$ are close to each other, that is, $c_{A_c\opt}(x_i, x_0) \approx c_{A_c\opt}(x_j, x_0)$. If their costs are too different, for example,  $c_{A_c\opt}(x_i, x_0) \ll c_{A_c\opt}(x_j, x_0)$, then it is highly likely that the subject will prefer $x_i$ to $x_j$ uniformly over the set of possible weighting matrices in $\mc U_{\mbb P}$. Profiting from this observation, we consider the following similar-cost heuristic:
\begin{itemize}[leftmargin = 5mm]
    \item Step 1: Compute the distances from $x_i$ to $x_0$: $s_i = (x_i - x_0)^\top A_c\opt (x_i - x_0)$ for all $i \in \llbracket N \rrbracket$,
    \item Step 2: Sort $s_i$ in a non-decreasing order. The sorted vector is denoted by $(s_{[1]}, \ldots, s_{[N]})$,
    \item Step 3: For each $i = 1, \ldots, N-1$, choose a pair of adjacent cost samples $x_{[i]}$ and $x_{[i+1]}$ corresponding to $s_{[i]}$ and $s_{[i+1]}$, then compute the projection distance of the incumbent center $A_c\opt$ to the hyperplane $\inner{M_{[i],[i+1]}}{A} = 0$.
    \item Step 4: Pick a pair of $([i], [i+1])$ that induces the smallest projection distance in Step 3.
\end{itemize}

In step 2, sorting costs $\mc O(N \log N)$. Nevertheless, in Step 3, we only need to compute $N$ times the projection distance by looking at pairs of adjacent costs, contrary to the total number of $\mc O(N^2)$ pairs. We provide an experiment comparing similar cost heuristics and exhaustive search in Section~\ref{sec:experiment}.

\section{Cost-Adaptive Recourse Recommendation} \label{sec:generation}

Given the subject input $x_0$, this section explores two generalizations to generate single and sequential recourses, adapted to the terminal confidence set $\mc U_{\mbb P}$ of the cost metric. In Section~\ref{sec:gradient}, we generalize the gradient-based counterfactual generation method in~\citet{ref:wachter2018counterfactual}. In Section~\ref{sec:graph}, we generalize the graph-based counterfactual generation method in~\citet{ref:poyiadzi2020face}.

\subsection{Gradient-based Cost-adaptive Single Recourse} \label{sec:gradient}

Given a machine learning model $f_{\theta}: \R^d \to (0, 1)$ that outputs the probability of being classified in the favorable group. The binary classifier $\mc C_{\theta}: \R^d \to \{0, 1\}$ takes the form of a threshold policy
\[
    \mc C_\theta(x) = \begin{cases}
        1 & \text{if } f_\theta(x) \ge 0.5, \\
        0 & \text{otherwise,}
    \end{cases}
\]
where we have used a threshold of $0.5$ similar to the setting in \citet{ref:wachter2018counterfactual}.

We suppose that we have access to the probability output $f_\theta$. Let $l$ be a differentiable loss function that minimizes the gap between $f_\theta(x)$ and the decision threshold 0.5; one can think of $l(f_\theta(x), 1)$ as the term that promotes the validity of the recourse. Given a weight $\lambda \ge 0$ which balances the trade-off between the validity and the (worst-case) cost, we can generate a recourse for an input instance $x_0$ by solving
\be \label{eq:recourse_relax}
        \Min{x \in \mc X}~\left\{l(f_\theta(x), 1) + \lambda \Max{A \in \mc U_{\mbb P}}(x - x_0)^\top A (x - x_0) \right\}.
\ee
In problem~\eqref{eq:recourse_relax}, we can use any loss function $l$ that is differentiable in $x$. A practical choice for loss function is~the quadratic loss $l(f_\theta(x), 1) = (f_\theta(x) - 0.5)^2$, which is a differentiable function in $x$~\citep{ref:wachter2018counterfactual}. Alternatively, we can use the hinge loss to measure the discrepancy between $f_\theta(x)$ and the target probability threshold $0.5$. Under a mild condition about the uniqueness of the optimal solution to the inner maximization problem, the cost term in the objective of~\eqref{eq:recourse_relax} is also differentiable. Thus, one can invoke a (projected) gradient descent algorithm to solve~\eqref{eq:recourse_relax} and find the recourse. Algorithm~\ref{alg:gd} proceeds iteratively to solve problem~\eqref{eq:recourse_relax}. In each iteration, we first find a matrix $A\opt$ of the max problem with a solver such as Mosek~\citep{ref:mosek}, and then we take a gradient step in the variable $x$ using the computed gradient. The next incumbent solution is the projection onto the set $\mathcal X$, where $\Pi_{\mathcal X}$ denotes the projection onto $\mathcal X$. Furthermore, similar to~\citet{ref:wachter2018counterfactual}, we can add an early stopping criterion for Algorithm~\ref{alg:gd}. For example, we can stop the algorithm at iteration $t$ as soon as $\mathcal C_\theta(x^t)=1$.

\begin{algorithm}[H]
	\caption{Gradient descent algorithm for cost-adaptive recourse generation}
	\label{alg:gd}
	\begin{algorithmic}
		\STATE {\bfseries Input:} Input $x_0$ such that $\mathcal C_\theta(x_0)=0$
            \STATE {\bfseries Parameters:}  $\lambda > 0$, learning rate $\alpha$
		\STATE {\bfseries Initialization:} 
		Set $x^{0} \leftarrow x_0$ 
	\FOR{$t =0, \ldots, T-1$}
            \STATE Set $A\opt \leftarrow \max_{A \in \mathcal U_{\mbb P}}~(x^{t} - x_0)^\top A (x^{t} - x_0)$
            \STATE Set $g\leftarrow \nabla l(f_\theta(x^{t}), 1) + 2\lambda A\opt (x^{t} - x_0)$
            \STATE Set $x^{t+1} \leftarrow \Pi_{\mathcal X}(x^{t} - \alpha g)$
        \ENDFOR
		\STATE{\bfseries Output:} $x^T$
	\end{algorithmic}
 
\end{algorithm}

\subsection{Graph-based Cost-adaptive Sequential Recourse} \label{sec:graph}

\begin{figure}
    \centering
    \includegraphics[width=0.5\linewidth]{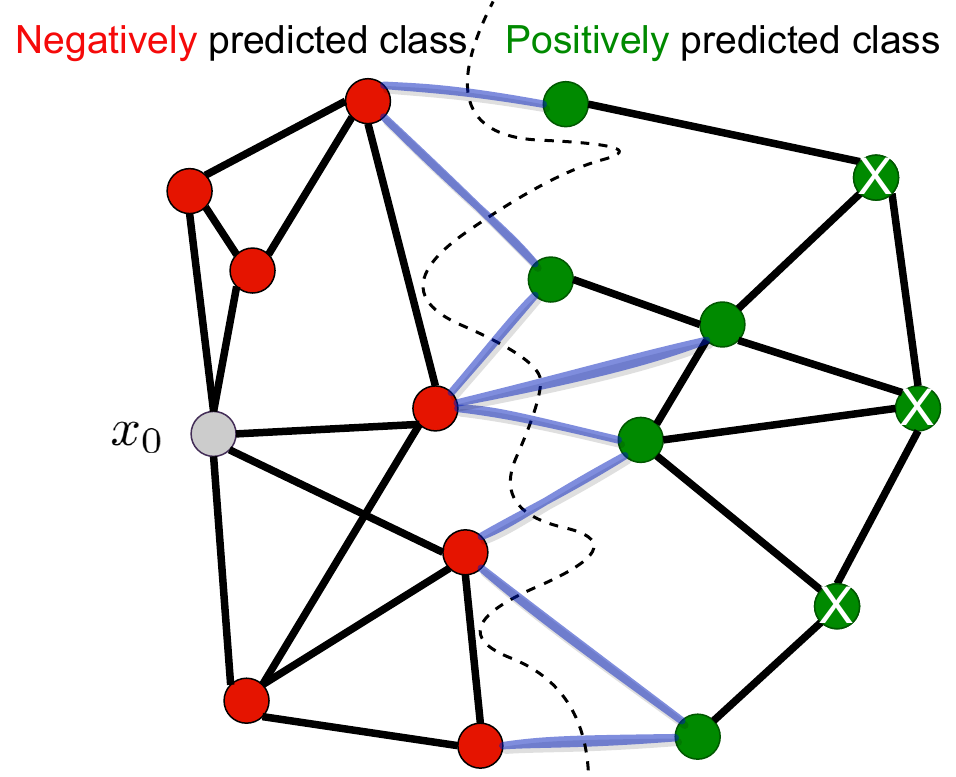}
    \caption{The illustration of $\mathcal{G}$ shows negatively predicted samples as red circles and positively predicted samples as green circles. The input instance $x_0$ is a gray circle. The terminal edges and unreachable nodes of flows in $\mathcal{F}$ are blue edges and green nodes with white crosses, respectively.}
    \label{fig:graph_based_m}
\end{figure}

In Section~\ref{sec:gradient}, we introduce a gradient-based recourse-generation method. However, this approach requires access to the gradient information, which is restricted in some real-world applications~\citep{ref:ilyas2018black, ref:alzantot2019genattack}. In this section, we present a model-agnostic recourse-generation approach that leverages the ideas from FACE~\citep{ref:poyiadzi2020face}. After $T$ rounds of questions in Section~\ref{sec:cost-identify}, we solve problem~\eqref{eq:chebyshev-sdp} to find the Chebyshev center $A\opt$ of the terminal confidence set $\mc U_{\mbb P}$. 

\textbf{Graph construction.} We first build a directed graph $\mathcal G = (\mathcal V, \mathcal E)$ that represents the geometry of the available data: each node $x_i \in \mathcal V =  \{x_0\} \cup \mathcal D_1 \cup \mathcal D_0$ corresponds to a data sample, and an edge $(x_i, x_j) \in \mathcal E$ represents a feasible transition from node $x_i$ to node $x_j$. We compute the edge weight $w_{ij}=c_{A\opt}(x_i, x_j)$ based on Mahalanobis cost function associated with matrix $A\opt$. Finally, $w_{ij}=\infty$ for $(x_i, x_j) \notin \mathcal E$.  

\textbf{Sequential recourse generation.} 
Recall that $\mathcal D_1$ is the set of all vertices with favorable predicted outcomes. A one-step recourse recommendation suggests a single continuous action from $x_0$ to $x_r$~\citep[e.g.,][]{ref:ustun2019actionable, ref:mothilal2020explaining}. A sequential recourse is a directed path from the input instance $x_0$ to a node $x_r \in \mathcal D_1$; each transition in the path is a concrete action that the subject has to implement to move towards $x_r$. A sequential recourse has several advantages compared to the one-step ones: plausibility and sparsity. In real-world applications, sequential steps are more plausible than a one-step continuous change~\citep{ref:ramakrishnan2020synthesizing, ref:singh2021directive}. Moreover, recent work shows that sequential recourse promotes sparsity, allowing subjects to modify a few features at each step~\citep{ref:verma2022amortized}. For an illustration of sequential recourse in a medical application, please refer to Figure~\ref{fig:medical}. The cost of a sequential recourse is computed by the sum of all the edge weights in the path. Thus, we can recommend a sequential and actionable recourse by finding a path that originates from $x_0$ and ends at the node $x_r\opt \in \mathcal D_1$ with the lowest cost. 

\textbf{Worst-case sequential recourse generation}. After conducting $T$ rounds of questions in Section~\ref{sec:cost-identify}, we obtain confidence set $\mathcal{U}_{\mathbb{P}}$ for the parameter $A_0$. However, the precise value of $A_0$ remains unknown. In this section, we focus on minimizing the total cost of the sequential recourse subject to the most unfavorable scenario of $A_0$ within the final confidence set. 

Let $\mathcal F$ denote the set containing all possible flows from the input subject $x_0$ to a node in $\mathcal D_1$.  Mathematically, we can write $\mathcal F$ as
\[
\mathcal F = \left\{ f_{ij} \in \{0, 1\}~ \forall (x_i, x_j) \in \mathcal E: \begin{array}{l}
        \sum_{(x_0, x_j) \in \mathcal E} z_{0j} - \sum_{(x_j, x_0) \in \mathcal E} z_{j0}= 1 \\
        \sum_{x_j \in \mathcal D_1, x_i \in \mathcal V \backslash \mathcal D_1} z_{ij} = 1 \\
        \sum_{(x_i, x_j) \in \mathcal E} z_{ij} - \sum_{(x_j, x_i) \in \mathcal E} z_{ji} = 0 ~~\forall x_i \in \mathcal V \backslash \mathcal D_1, x_i \neq x_0
\end{array}
\right\}.
\]

Figure~\ref{fig:graph_based_m} illustrates the visual representation of the set $\mathcal{F}$. The first constraint ensures that the total flow out of $x_0$ is precisely one. The second constraint enforces the terminal condition for flows, halting the flow once it reaches the first node in the positive class. In the visual depiction in Figure~\ref{fig:graph_based_m}, the terminal edges of flows are visually distinguished as blue edges. Consequently, positive nodes without direct connections from negative nodes are not part of any flows, and they are identifiable as green nodes with white crosses in Figure~\ref{fig:graph_based_m}. The third constraint imposes flow conservation at each negatively predicted node. For any $f \in \mathcal F$, we have $f_{ij} = 1$ if the edge $(x_i, x_j)$ constitutes one (actionable) step in the path. The optimal cost-robust sequential recourse is defined to be the optimal flow of the min-max problem
\be \label{eq:sp}
        \min_{f \in \mathcal F}~\max \limits_{A \in \mathcal U_{\mbb P}} \sum_{(x_i, x_j) \in \mathcal E} w_{ij}(A) f_{ij},
\ee
with the edge weight depends explicitly on the weighting matrix $A$ as
$ w_{ij}(A) = (x_i - x_j)^\top A (x_i - x_j)$. The next proposition asserts an equivalent form of~\eqref{eq:sp} as a single-layer minimization problem.

\begin{proposition}[Equivalent formulation]
    \label{prop:sp}
    Problem~\eqref{eq:sp} is equivalent to
    \be
    \label{eq:sp2}
    \left\{
    \begin{array}{cll}
        \min & \inner{U}{I} + \eps \sum_{(i, j) \in \mbb P} t_{ij} \\ 
        \st & f \in \mathcal F,~t_{ij} \ge 0~\forall (i, j) \in \mbb P,~U \in \PSD^d \\
        & U + \sum_{(i, j) \in \mbb P}~M_{ij} t_{ij} \succeq \sum_{(x_i, x_j) \in \mathcal E} (x_i - x_j) (x_i - x_j) ^\top f_{ij}.
    \end{array}
    \right.
\ee
\end{proposition}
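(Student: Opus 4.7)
The plan is to dualize the inner maximization over $A$ for any fixed flow $f \in \mathcal F$, which produces a semidefinite programming dual that, once reinserted under the outer minimization, yields exactly the right-hand side formulation in~\eqref{eq:sp2}.

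First I would rewrite the inner objective in trace form. Using the identity $(x_i - x_j)^\top A (x_i - x_j) = \inner{A}{(x_i - x_j)(x_i - x_j)^\top}$, the inner maximization reduces to a linear SDP
\[
\max_{A \in \PSD^d}~\left\{ \inner{A}{C(f)} ~:~ A \preceq I,~ \inner{A}{M_{ij}} \le \eps~\forall (i,j) \in \mbb P \right\},
\]
where $C(f) \Let \sum_{(x_i, x_j) \in \mathcal E} (x_i - x_j)(x_i - x_j)^\top f_{ij}$.

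Next I would form the Lagrangian by attaching a dual matrix $U \in \PSD^d$ to $I - A \succeq 0$, a dual matrix $S \in \PSD^d$ to $A \succeq 0$, and nonnegative scalars $t_{ij}$ to the preference constraints $\inner{A}{M_{ij}} \le \eps$. Rearranging yields
\[
L = \inner{U}{I} + \eps \sum_{(i,j) \in \mbb P} t_{ij} + \Big\langle A,~ C(f) - U + S - \sum_{(i,j) \in \mbb P} t_{ij} M_{ij} \Big\rangle.
\]
Maximizing $L$ over the unconstrained $A$ forces the coefficient of $A$ to vanish, i.e.\ $U + \sum t_{ij} M_{ij} = C(f) + S$, and eliminating $S \succeq 0$ gives the semidefinite inequality $U + \sum t_{ij} M_{ij} \succeq C(f)$. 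The dual of the inner SDP therefore is precisely
\[
\min~\inner{U}{I} + \eps \sum_{(i,j) \in \mbb P} t_{ij} \quad \st \quad U \in \PSD^d,~t \ge 0,~U + \sum_{(i,j) \in \mbb P} t_{ij} M_{ij} \succeq C(f).
\]

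To justify the exchange of min and max, I would invoke strong SDP duality. The assumption that $\mc U_{\mbb P}$ has a non-empty interior (already used in Theorem~\ref{thm:chebyshev}) gives a strictly feasible $A$ for the primal inner problem; Slater's condition is therefore satisfied and the inner optimal value equals the dual optimal value. Finally, substituting the dual expression back and combining the outer minimization over $f \in \mathcal F$ with the minimization over $(U, t)$ produces the joint minimization problem~\eqref{eq:sp2}.

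The main obstacle, and the only step requiring real care, is verifying that Slater's condition is met uniformly across the outer minimization and that the constraint $A \succeq 0$ is indeed implicit in the primal; fortunately the standing interior assumption on $\mc U_{\mbb P}$ suffices, and the positive semidefiniteness handling via the slack $S \succeq 0$ cleanly yields the one-sided matrix inequality in~\eqref{eq:sp2}. The remaining manipulations are routine SDP duality bookkeeping.
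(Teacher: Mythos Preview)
Your proposal is correct and follows essentially the same route as the paper: dualize the inner SDP over $A$ for fixed $f$, then merge the resulting minimization with the outer minimization over $\mathcal F$. In fact you give more detail than the paper does---the paper simply invokes ``semidefinite programming duality'' to pass from the primal to the dual form, whereas you spell out the Lagrangian, the role of the slack $S\succeq 0$, and the Slater argument for strong duality; all of this is sound and matches the paper's (implicit) reasoning.
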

\begin{proof}[Proof of Proposition~\ref{prop:sp}]
Semidefinite programming duality asserts that
    \begin{align*}
\max \limits_{A \in \mathcal U_{\mbb P}}~\sum_{(i, j) \in \mathcal E} w_{ij}(A) f_{ij}
=&\left\{
    \begin{array}{cl}
    \max & \inner{A}{\sum_{(i, j) \in \mathcal E}  (x_{i} - x_{j}) (x_{i} - x_{j}) ^\top f_{ij} } \\
    \st & 
    \inner{A}{M_{ij} } \le \eps ~\forall (i, j) \in \mbb P \\
    & 0 \preceq A \preceq I
    \end{array}
\right. \\
=& \left\{
    \begin{array}{cl}
    \min & \inner{U}{I} + \eps \sum_{(i,j) \in \mbb P} t_{ij} \\
    \st & U + \sum_{(i,j) \in \mbb P}~M_{ij} t_{ij} \succeq \sum_{(x_i, x_j) \in \mathcal E}  (x_{i} - x_{j}) (x_{i} - x_{j}) ^\top f_{ij} \\
    & t_{ij} \ge 0~\forall (i,j) \in \mbb P,~U \in \PSD^d.
    \end{array}
\right. 
\end{align*}
Replacing the minimization above into the objective function leads to the postulated result. 
\end{proof}

Problem~\eqref{eq:sp2} is a binary semidefinite programming problem, which is challenging to solve due to its combinatorial nature. Consequently, finding an optimal sequential recourse can be a daunting task. To address this issue, we propose an alternative approach. Specifically, we associate the weight of each edge $(x_i, x_j)$ with its maximum cost taken over all possible values of $A$ in the set $\mathcal{U}_{\mathbb{P}}$:
\begin{align*}
\bar{w}_{ij} = \max \limits_{A \in \mathcal U_{\mbb P}}~w_{ij}(A) = \left\{
    \begin{array}{cl}
    \max & \inner{A}{(x_{i} - x_{j}) (x_{i} - x_{j}) ^\top  } \\
    \st & 0 \preceq A \preceq I, \quad 
    \inner{A}{M_{i'j'} } \le \eps ~\forall (i', j') \in \mbb P.
    \end{array}
\right.
\end{align*}
Given a graph $\mathcal G$ with the worst-case weight matrix $[\bar{w}_{ij}]$, we find the shortest paths from $x_0$ to each positively-predicted node in $\mathcal D_1$. The recommended sequential recourse is the path that originates from $x_0$ and ends at the node $x_r\opt \in \mathcal D_1$ with the lowest path cost.

It is important to note that the terminal set $\mathcal U_{\mathbb{P}}$ is generally not a singleton as it still contains many possible matrices that conform with the feedback information. To find the recourse, we need to borrow the ideas from robust optimization, which formulates the problem as a min-max optimization problem. Looking at the worst-case situation can eliminate any bad surprises regarding the implementation cost. The min-max problem provides an acceptable recourse for challenging scenarios within the domain where $A$ is a matrix satisfying $A \preceq I, A \in \mathbb{S}_{+}^d$. This approach also proves valuable when users' responses contain significant noise and inconsistencies, resulting in a still large search space for $A_0$ in the final round.

\section{Generalizations} \label{sec:generalization}

In this section, we describe two main generalizations of our framework: Section~\ref{sec:inconsistency} considers possible inconsistencies in the preference elicitation of the subject, and Section~\ref{sec:multi} considers the generalization to a $k$-way questioning.

\subsection{Addressing Inconsistency in Cost Elicitation} \label{sec:inconsistency}

It is well-documented that human responses in behavior elicitation may exhibit inconsistencies. Inconsistencies occur when there exist three distinct indices $(i, j, k)$ such that the user states $x_i \mc P x_j$, $x_j \mc P x_k$ and $x_k \mc P x_i$. In this case, the set $\mc U_{\mbb P}$ becomes empty, and finding a Chebyshev center $A_c\opt$ is impossible. One practical approach to alleviate the effect of the inconsistency is to allow a fraction of the stated preferences to be violated in the definition of the cost-uncertainty set $\mc U_{\mbb P}$. Let $| \mbb P |$ denote the cardinality of the set $\mbb P$.  Suppose we tolerate $\alpha$\% of inconsistency, i.e., there are at most $\alpha | \mbb P|$ preferences in the set $\mbb P$ that can be violated. We define $\mc U_{\mbb P}^\alpha$ as the set of possible cost matrices $A$ with at most $\alpha\%$ inconsistency with the preference set $\mbb P$. This set can be represented using auxiliary binary variables as 
\be \label{eq:U-consistent-def}
\mc U_{\mbb P}^\alpha = 
\left\{
A \in \PSD^d: 
\begin{array}{l}
\exists \gamma_{ij} \in \{0, 1\}~~\forall (i,j) \in \mbb P \\
\sum_{(i,j) \in \mbb P} \gamma_{ij} \le \alpha | \mbb P| \\
\inner{A}{M_{ij}} \le \eps + \gamma_{ij} \mathds{M}
\end{array}
\right\},
\ee
where $\mathds{M}$ is a big-M constant. Intuitively, $\gamma_{ij}$ is an indicator variable: $\gamma_{ij}=1$ implies that the preference $x_i \mc P x_j$ is inconsistent, and thus the corresponding halfspace becomes $\inner{A}{M_{ij}} \le \eps + \mathds{M}$, which is a redundant constraint. The Chebyshev center of the set $\mc U_{\mbb P}^\alpha$ can be found by solving a binary semidefinite program. 

\begin{theorem}[Chebyshev center with inconsistent elicitation] \label{thm:chebyshev2}
Given a tolerance parameter $\alpha \in (0, 1)$. The Chebyshev center $A_c\opt$ of the set $\mc U_{\mbb P}^\alpha$ can be found by solving the binary semidefinite program 
\be \label{eq:chebyshev2}
    \begin{array}{cl}
        \max & r \\
        \st & A_c \in \PSD^d,~r \in \R_+,~\gamma_{ij} \in \{0, 1\}~~~~\forall (i,j) \in \mbb P \\
        & \inner{A_c}{M_{ij}} + r \| M_{ij} \|_F \le \eps + \gamma_{ij} \mathds{M} \quad \forall (i,j) \in \mbb P\\
        & \sum_{(i,j) \in \mbb P} \gamma_{ij} \le \alpha | \mbb P | \\
        & A_c \preceq I,
    \end{array}
\ee
where $\mathds{M}$ is a big-M constant.
\end{theorem}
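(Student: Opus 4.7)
The plan is to follow the template of the proof of Theorem~\ref{thm:chebyshev} verbatim, inserting the inconsistency-tolerance indicators $\gamma_{ij}$ into the derivation. Using the representation~\eqref{eq:U-consistent-def}, I start from the definition of the Chebyshev center as the triple $(A_c, r, \gamma)$ maximizing $r$ such that every perturbed matrix $A_c + \Delta$, with $\Delta$ in the Frobenius ball $\mathcal B_r = \{\Delta \in \Sym^d : \|\Delta\|_F \le r\}$, belongs to $\mc U_{\mbb P}^\alpha$. Lifting the existential quantifier over $\gamma$ in the definition of $\mc U_{\mbb P}^\alpha$ out to a single selection shared across the ball, the Chebyshev-center problem becomes
\[
\begin{array}{cl}
\max & r \\
\st & A_c \in \PSD^d,~r \in \R_+,~\gamma_{ij} \in \{0,1\}~\forall (i,j)\in\mbb P,~\sum_{(i,j)\in\mbb P}\gamma_{ij} \le \alpha|\mbb P|,~A_c \preceq I, \\
& \inner{A_c + \Delta}{M_{ij}} \le \eps + \gamma_{ij}\mathds M \quad \forall \Delta \in \mathcal B_r,~\forall (i,j)\in\mbb P.
\end{array}
\]

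With $\gamma$ fixed, I would then invoke the robust-counterpart argument already carried out in Theorem~\ref{thm:chebyshev}. By self-duality of the Frobenius norm, $\sup_{\|\Delta\|_F \le r}\inner{\Delta}{M_{ij}} = r\|M_{ij}\|_F$, so each semi-infinite constraint collapses to the deterministic inequality $\inner{A_c}{M_{ij}} + r\|M_{ij}\|_F \le \eps + \gamma_{ij}\mathds M$. Collecting these constraints for all $(i,j) \in \mbb P$ and retaining the binary cardinality budget on $\gamma$ together with the PSD upper bound $A_c \preceq I$ recovers exactly the binary semidefinite program~\eqref{eq:chebyshev2}, completing the derivation.

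The main obstacle, and the only place where the argument departs from the proof of Theorem~\ref{thm:chebyshev}, is the lifting step that replaces the existential quantifier over $\gamma$ in~\eqref{eq:U-consistent-def} by a single $\Delta$-independent selection. I would justify this by the standard big-M argument: because $\mathds M$ is chosen sufficiently large, any halfspace neutralized by $\gamma_{ij}=1$ remains vacuous throughout the entire ball $\mathcal B_r$, so the most favorable single assignment of $\gamma$ that certifies $\mathcal B_r \subseteq \mc U_{\mbb P}^\alpha$ is exactly the one extracted by the joint maximization in~\eqref{eq:chebyshev2}; permitting $\gamma$ to depend on $\Delta$ would not enlarge the attainable radius. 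Once this observation is stated, the remainder of the proof is an immediate application of the robust-counterpart manipulation from Theorem~\ref{thm:chebyshev}.
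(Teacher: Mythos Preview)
Your proposal is correct and follows essentially the same route as the paper's proof: both lift the binary indicators $\gamma_{ij}$ into the outer maximization, then apply the robust-counterpart reduction via self-duality of the Frobenius norm exactly as in Theorem~\ref{thm:chebyshev}. The paper's proof is in fact terser than yours---it simply writes down the semi-infinite formulation with $\gamma$ already treated as a $\Delta$-independent decision variable and does not comment on the lifting step you single out; your big-$\mathds M$ justification for that step is an addition rather than a deviation.
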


\begin{proof}[Proof of Theorem~\ref{thm:chebyshev2}] Using the definition of the set $\mathcal U_{\mbb P}$ as in~\eqref{eq:U-consistent-def}, the optimization problem to find the Chebyshev center and its radius can be rewritten as
\[
    \begin{array}{cl}
    \max & r \\
     \st & A_c \in \PSD^d,~r \in \R_+ \\
     & \inner{A_c + \Delta}{M_{ij}} \le \eps + \gamma_{ij} \mathds{M} ~\forall \Delta \in \mathcal B_r,~\forall (i,j) \in \mbb P\\
    & \sum_{(i,j) \in \mbb P} \gamma_{ij} \le \alpha | \mbb P |,
    \end{array}
\]
    where $\mathcal B_r$ is a ball of symmetric matrices with Frobenius norm bounded by $r$: 
    \[
    \mathcal B_r = \{\Delta \in \Sym^d: \| \Delta \|_F \le r\}.
    \]
    Pick any $(i,j) \in \mbb P$, the semi-infinite constraint
    \[
    \inner{A_c + \Delta}{M_{ij}} \le \eps + \gamma_{ij} \mathds{M} ~\forall \Delta \in \mathcal B_r
    \]
    is equivalent to the robust constraint
    \[
    \inner{A_c}{M_{ij}} + \sup\limits_{\|\Delta\|_F \le r} \inner{\Delta}{M_{ij}} \le \eps + \gamma_{ij} \mathds{M}.
    \]
    Because the Frobenius norm is a self-dual norm, we have
    \[
    \sup\limits_{\|\Delta\|_F \le r} \inner{\Delta}{M_{ij}} = r \| M_{ij} \|_F.
    \]
    Replacing the above equation to the optimization problem completes the proof.
\end{proof}
Unfortunately, problem~\eqref{eq:chebyshev2} is a binary SDP, and state-of-the-art solvers such as Mosek and GUROBI do not support this class of problem. Adhoc methods to solve binary SDP can be found in~\citet{ref:ni2018mixed} and the references therein.

\subsection{Multiple-option Questions}
\label{sec:multi}

Previous results rely on the pairwise comparison settings: given two valid recourses, $x_i$ and $x_j$, the subject indicates one preferred option. These settings can be easily generalized to $k$-option comparison: Given $k$ distinct indices $i_1, \ldots, i_k$, the subject is asked ``Which recourse among $x_{i_1}, \ldots, x_{i_k}$ do you prefer the most?.'' The answer from the subject will reveal $k-1$ binary preferences: for example, if the subject prefers $x_{i_1}$ the most, then it is equivalent to a revelation of $k-1$ preferences: $x_{i_1} \mc P x_{i_2}, \ldots, x_{i_1} \mc P x_{i_k}$. Thus, if we use a $k$-option question, we can add $k-1$ relations to the set $\mbb P$, which correspond to $k-1$ hyperplanes to the set $\mc U_{\mbb P}$. The computation of the Chebyshev center $A_c\opt$ in Section~\ref{sec:chebyshev} remains invariant. The only added complication is the increased complexity in searching for the next $k$ recourses to ask the subject: instead of $\mc O(N^2)$ questions, the space of possible questions is now of order $N^k$. Fortunately, we can slightly modify the similar cost heuristics to accommodate the $k$-option questions. More specifically, in Step 3 of the heuristics, we can use the following:
\begin{itemize}[leftmargin=5mm]
	\item Step 3: For each $i = 1, \ldots, N-k+1$, choose a tuple of adjacent cost samples $(x_{[i]}, \ldots, x_{[i+k-1]})$ corresponding to $k$-adjacent costs $(s_{[i]}, \ldots, s_{[i+k-1]})$, then compute the \textit{average} projection distance of the incumbent center $A_c\opt$ to the hyperplanes $\inner{M_{[i+k'],[i+k'+1]}}{A} = 0$ for $k'= 0, \ldots, k-2$.
\end{itemize}
The complexity of this heuristics remains $\mc O(N\log(N))$.

\section{Numerical Experiments} \label{sec:experiment}
We evaluate our method, Cost-Adaptive Recourse Recommendation by Adaptive Preference Elicitation (ReAP), using synthetic data and seven real-world datasets: German, Bank, Student, Adult, COMPAS, GMC, and HELOC. Notably, these datasets are commonly used in recourse literature~\citep{ref:verma2022counterfactual,  ref:upadhyay2021towards, ref:mothilal2020explaining}. The main paper presents the results for Synthesis, German, Bank, Student, Adult, and GMC datasets. The results for other datasets can be found in the appendix. For the gradient-based single recourse method in Section~\ref{sec:gradient}, we compare our method to Wachter~\citep{ref:wachter2018counterfactual} and DiCE~\citep{ref:mothilal2020explaining}. For the graph-based sequential recourse method in Section~\ref{sec:graph}, we compare our method to FACE~\citep{ref:poyiadzi2020face} and PEAR~\citep{ref:toni2022generating}. In Appendix~\ref{sec:app-exp}, we present the detailed implementation and numerical results for additional datasets and provide a benchmarking performance for the proposed heuristics. We provide our source code at \url{https://github.com/duykhuongnguyen/ReAP}.

\subsection{Experimental Setup} We follow the standard setup in recourse-generation problem:

\textbf{Data preprocessing.} Following~\citet{ref:mothilal2020explaining}, we preprocess the data using the min-max standardizer for continuous features and one-hot encoding for categorical features.

\textbf{Classifier.} For each dataset, we perform an 80-20 uniformly split (80\% for training) of the original dataset. Then, we train an MLP classifier $\mc C_{\theta}$ on the training data. We use the test data to benchmark the performance of different recourse-generation methods.

\textbf{Cost matrix generation.} We generate $10$ ground-truth matrices $A_0$ with this procedure: first, we generate a matrix $A \in \R^{d \times d}$ of random, standard Gaussian elements, where $d$ is the dimension of $x_0$. Then we compute $A_0 = A A^\top$ and normalize $A_0$ to have a unit spectral radius by taking $A_0 \leftarrow A_0/\sigma_{\max}(A_0)$, where $\sigma_{\max}$ is the maximum eigenvalue function.

For an input $x_0$ and a ground-truth matrix $A_0$, we choose $T$ questions using the similar-cost heuristics in Section~\ref{sec:heuristics} to find the set $\mc U_{\mbb P}$. After $T$ rounds of question-answers, we solve~\eqref{eq:chebyshev-sdp} using MOSEK to find the Chebyshev center $A\opt$ of the terminal confidence set $\mc U_{\mbb P}$. Then, we generate recourse using the gradient-based method in Section~\ref{sec:gradient} and the graph-based method in Section~\ref{sec:graph}. Note that with $T=0$, we haven't asked any questions. Thus, $A\opt=\half I$ (an uninformative estimate). Hence, all algorithms share the same cost function. Within this context, the proposed worst-case sequential recourse generation in Section~\ref{sec:graph} demonstrates the effectiveness as it manages to provide an acceptable recourse for challenging scenarios within the domain where $A$ is a matrix satisfying $A \preceq I, A \in \mathbb{S}_{+}^d$. This approach also proves valuable when users' responses contain significant noise and inconsistencies, resulting in a still large search space for $A_0$ in the final round.

\subsection{Metrics for Comparison} \label{sec:metric}
We compare different recourse-generation methods using the following metrics:

\textbf{Validity.} A recourse $x_r$ generated by a recourse-generation method is valid if $\mc C_{\theta}(x_r)=1$. We compute validity as the fraction of instances for which the recommended recourse is valid.

\textbf{Cost.} For the gradient-based single recourse method, we calculate the cost of a recourse $x_r$ as the Mahalanobis distance between $x_r$ and $x_0$ evaluated with the ground-truth matrix $A_0$ as $c_{A_0}(x_r, x_0)$.

\textbf{Shortest-path cost.} For the graph-based recourse-generation, we report the cost of a sequential recourse $x_0 \to \ldots \to x_r$ as the path cost from input $x_0$ to $x_r$, evaluated with $A_0$.
    
\textbf{Mean rank.} We borrow the ideas from~\citet{ref:bertsimas2013learning} and consider the mean rank metric for ranking recourses based on subject preference. We first rank all of the recourses in the positive dataset $\mc D_1$ with their preferences according to the ground-truth matrix $A_0$. Thus, the recourse with the smallest cost is ranked 1, and the recourse with the largest cost is ranked $N$ ($N$ is the total number of recourses in the positive dataset). We then find the top $K$ recourses according to the cost metric $c_{A\opt}(x, x_0)$ and compare the selected solutions with the true rank of the recourse. Therefore, smaller values indicate that the matrix $A\opt$, the Chebyshev center of the terminal confidence set, is closer to the ground truth $A_0$. Each recourse $x_i \in \mc D_1$ thus can be assigned with a rank $r_i \in [1,\dots,N]$. We compute the normalized mean rank of top $K$ recourses as $r_{\mathrm{mean}} = (\sum_{i=1}^{K} r_i - r_{\min}) / r_{\max}$ where $r_{\min}=\sum_{i=1}^{K} i =(K + 1)K/2$ and $r_{\max}=\sum_{i=N-K+1}^{N} i = (2N - K + 1) K/2$ are normalizing constants so that $r_{\mathrm{mean}} \in (0, 1)$.

\subsection{Numerical Results}
We conducted several experiments to study the efficiency of our framework in generating cost-adaptive recourses. Firstly, we compare our two cost-adaptive recourse-generation methods, gradient-based and graph-based, with the recourse-generation baselines. We also conduct a statistical test to examine the paired difference of the cost between our method and baselines. Then, we study the impact of the number of questions $T$ on the mean rank and compare our two proposed frameworks, two-option and multiple-option questions. Lastly, we conduct the experiments to demonstrate the effectiveness of the worst-case objective proposed in Section~\ref{sec:graph}. Appendix~\ref{sec:app-exp} provides additional numerical results and discussions.

\subsubsection{Gradient-based Cost-adaptive Recourse} In this experiment, we generate recourse using our gradient-based recourse-generation method. We compute the cost as the Mahalanobis distance described in Section~\ref{sec:metric}. We compare our method with three baselines: Wachter and DiCE. We select a total of $T=5$ questions for our ReAP framework. ReAP and Wachter's method share the learning rate $\alpha$ and the trade-off parameter between validity and cost $\lambda$. We opt for $\alpha=0.01$ and $\lambda=1.0$. We use the default setting for the proximity weight and the diversity weight of DiCE with values $0.5$ and $1.0$, respectively. Table~\ref{tab:gd-exp} demonstrates that DiCE has the highest cost across all datasets, and its validity is not perfect in the German, Bank, and Student datasets. Our method has similar validity to Wachter but at a lower cost in three out of four datasets. It is important to note that if $T=0$, the Chebyshev center is $A\opt=\half I$, and the cost metric $c_{A\opt}(x, x_0)$ becomes the squared Euclidean distance between $x$ and $x_0$, which DiCE and Wachter directly optimize. Thus, these results indicate that our approach effectively adjusts to the subject's cost function and adequately reflects the individual subject's preferences.

\begin{table}[ht]
    \centering
    \caption{Benchmark of Cost and Validity between gradient-based methods on four datasets.}
    \label{tab:gd-exp}
    \pgfplotstabletypeset[
            col sep=comma,
            string type,
            every head row/.style={before row=\toprule,after row=\midrule},
            every row no 2/.style={after row=\midrule},
            every row no 5/.style={after row=\midrule},
            every row no 8/.style={after row=\midrule},
            every last row/.style={after row=\bottomrule},
            columns/dataset/.style={column name=Dataset, column type={l}},
            columns/method/.style={column name=Methods, column type={l}},
            columns/cost/.style={column name=Cost $\downarrow$, column type={c}},
            columns/valid/.style={column name=Validity $\uparrow$, column type={c}},
        ]{mlp_gd.csv}
\end{table}

\subsubsection{Graph-based Cost-adaptive Recourse} 
In this experiment, we generate recourse using the graph-based sequential recourse method. We compute the cost of a sequential recourse as the shortest-path cost described in Section~\ref{sec:metric}. We compare our graph-based method with FACE. We follow the implementation of CARLA~\citep{ref:pawelczyk2021carla} to construct a nearest neighbor graph with $K=10$. We choose $T=5$ questions for our ReAP method. Table~\ref{tab:graph-exp} demonstrates that our ReAP framework has the lowest cost across all four datasets. The validity of the two methods is perfect in all four datasets because the two methods both find a path from the input node $x_0$ to the node $x_r \in \mathcal D_1$. As mentioned above, if $T=0$, the cost metric $c_{A\opt}(x, x_0)$ becomes squared of the Euclidean distance between $x$ and $x_0$, and FACE builds the graph using this Euclidean metric. These observations show that our graph-based method accurately captures the subjects' preferences and adapts to their cost function.

\begin{table}[ht]
    \centering
    \caption{Benchmark of Path cost between graph-based ReAP and FACE. All methods attain the validity of $1.00\pm0.00$.}
    \label{tab:graph-exp}
        \pgfplotstabletypeset[
            col sep=comma,
            string type,
            every head row/.style={before row=\toprule,after row=\midrule},
            every row no 1/.style={after row=\midrule},
            every row no 3/.style={after row=\midrule},
            every row no 5/.style={after row=\midrule},
            every last row/.style={after row=\bottomrule},
            columns/dataset/.style={column name=Dataset, column type={l}},
            columns/method/.style={column name=Methods, column type={l}},
            columns/cost/.style={column name=Path cost $\downarrow$, column type={c}},
        ]{mlp_graph.csv}
\end{table}

\subsubsection{Comparison with PEAR} We implement the PEAR method proposed by~\citet{ref:toni2022generating} based on our understanding of the method and the details outlined in the original paper. We conduct this experiment using Adult and GMC datasets, consistent with their usage in~\citet{ref:toni2022generating}.

Comparing our method to PEAR~\citep{ref:toni2022generating} is not straightforward due to the difference in the cost modeling. Specifically,~\citet{ref:toni2022generating} utilizes a linear structural causal model, whereas we assume the cost function takes the form of the Mahalanobis distance. In this experiment, we employ a Manhattan ($\ell_1$) distance to measure the cost of the actions. In this way, both our method and the PEAR method are misspecified. This experiment aims to benchmark which method is more robust to the misspecification of the cost functional form. As we assume that the subject $x_0$ has the true cost function of the form $c(x, x_0) = \|x - x_0\|_1$, between two recourses $x_i$ and $x_j$, $x_i$ is preferred to $x_j$ if
\[
\|x_i - x_0\|_1 \le \|x_j - x_0\|_1.
\]

Our approach employs the above response model for constructing the terminal confidence set  $\mc U_{\mbb P}$. In contrast, PEAR utilizes the same model to select the optimal intervention in each iteration~\citep[Algorithm~1]{ref:toni2022generating}. Regarding the objective, our method is designed to learn the matrix $A_0$ within the framework of Mahalanobis distance while PEAR's objective is to learn the optimal weights for the cost function outlined in~\citet[Equation (3)]{ref:toni2022generating}.

We choose $T=10$ questions for both methods to ensure a fair comparison. Additionally, since our approach involves pairwise comparisons between recourses, we set the choice set size for PEAR to $2$, which aligns with our method. Following the settings in~\citet{ref:toni2022generating}, the prior distribution of weights takes the form of a mixture of Gaussians with $6$ components, where the means were randomized, and the covariance matrix was set to identity. When $T = 0$, the weights are initialized using the expected prior value.

After we have learned the cost function using each method, we use the graph-based recourse method wherein we construct the graph using the methodology outlined in the FACE method~\citep{ref:poyiadzi2020face}. For our method, we proceed to reassign the weights of the edges $(x_i, x_j) \in \mathcal{E}$ within the graph, employing the cost function $\bar{w}_{ij}=c_{A\opt}(x_i, x_j)$, where $A\opt$ is the Chebyshev center of the terminal confidence set. For PEAR, we reassign the edge weights using the cost function defined in~\citet[Equation (3)]{ref:toni2022generating}. Thus, the two methods can access the same graph structure but different edge costs. We then solve the graph-based recourse problem in Section~\ref{sec:graph}. Finally, we evaluate the path cost from input $x_0$ to $x_r$, evaluated with Manhattan distance, which is the true cost function in this experiment.

\begin{table}
    \centering
    \caption{Benchmark of Path cost between PEAR and graph-based ReAP. All methods attain the validity of $1.00\pm0.00$. Thus, we do not display Validity in the table.}
    \label{tab:graph-detoni}
    \small
    \pgfplotstabletypeset[
        col sep=comma,
        string type,
        every head row/.style={before row=\toprule,after row=\midrule},
        every row no 1/.style={after row=\midrule},
        every row no 3/.style={after row=\midrule},
        every row no 5/.style={after row=\midrule},
        every last row/.style={after row=\bottomrule},
        columns/dataset/.style={column name=Dataset, column type={l}},
        columns/method/.style={column name=Methods, column type={l}},
        columns/cost/.style={column name=Path cost $\downarrow$, column type={c}},
    ]{mlp_graph_detoni.csv}
\end{table}

Table~\ref{tab:graph-detoni} reports the mean and standard deviation of path cost over $100$ test samples. These results demonstrate that our method ReAP performs comparable to PEAR in the Adult dataset, while we outperform PEAR in the GMC dataset.

\subsubsection{Statistical Test}
We propose a statistical test to look at the paired difference of the cost: for each subject, we compute the ReAP cost and the competing method’s cost. We propose to test the hypotheses:
\begin{itemize}
    \item Null hypothesis $\mathcal H_0$: ReAP cost equals the competing method’s cost.
    \item Alternative hypothesis $\mathcal H_a$: ReAP cost is \textit{smaller} than the competing method’s cost.
\end{itemize}
We use a one-sided Wilcoxon signed-rank test to test the above hypothesis to compare the paired cost values. The p-value of the test between ReAP and the baselines is reported in Table~\ref{tab:stats_test_1} and Table~\ref{tab:stats_test_2}. Suppose we choose the significant level at $0.05$; Table~\ref {tab:stats_test_1} indicates that we should reject the null hypothesis against DiCE and FACE in all three datasets: German, Bank, and Student. ReAP outperforms Wachter in two out of three datasets, except for the Student dataset. This non-rejection does not imply that we should accept the alternative hypothesis that Wachter generates a lower cost than ReAP in the Student dataset. Additionally, ReAP demonstrates superiority over PEAR in the GMC dataset, as seen in Table~\ref{tab:stats_test_2}.

\begin{table}[h!]
    \centering
    \caption{p-values of the Wilcoxon test between ReAP and competing methods (DiCE, Wachter, and FACE) on German, Bank, and Student datasets.}
    \begin{tabular}{lccc}
        \toprule
         Datasets & German &  Bank & Student \\
         \hline
         ReAP vs.~DiCE & 7e-12 & 3e-18 & 0.046 \\
         ReAP vs.~Wachter & 0.0029 & 0.008 & 0.224 \\ 
         ReAP vs.~FACE & 0.009 & 5e-8 & 0.019 \\ 
         \bottomrule
    \end{tabular}
    \label{tab:stats_test_1}
\end{table}

\begin{table}[h!]
    \centering
    \caption{p-values of the Wilcoxon test between ReAP and PEAR on two datasets, Adult and GMC.}
    \begin{tabular}{lcc}
        \toprule
         Datasets & Adult & GMC \\
         \hline
         ReAP vs.~PEAR & 0.442 & 0.0015 \\
         \bottomrule
    \end{tabular}
    \label{tab:stats_test_2}
\end{table}

\subsubsection{Impact of the Number of Questions $T$ to the Mean Rank} 

Here, we analyze the impact of the number of questions $T$ on the mean rank. We first fix the parameter $\eps=0.01$ and vary the number of questions as an integer $T \in [0, 10]$. For each value of $T$, we choose $T$ questions with the heuristics in Section~\ref{sec:heuristics} and solve problem~\eqref{eq:chebyshev-sdp} to find the center $A\opt$. Then, we evaluate the mean rank with $A\opt$. Figure~\ref{fig:mean_rank} demonstrates that the average mean rank decreases as the number of questions increases. This implies that the Chebyshev center $A\opt$ comes closer to the ground truth $A_0$ with the more questions we ask. As a result, the estimate of the actual cost function is more accurate as the number of questions increases. In the preference learning literature, the ideal number of rounds is usually between $3$ and $10$~\citep{ref:bertsimas2013learning}. In Figure~\ref{fig:mean_rank}, we show that as $T$ varies from $0$ to $10$, the estimation of the true cost function is improved. In Table~\ref{tab:gd-exp} and Table~\ref{tab:graph-exp}, at $T=5$, our method outperforms the baselines regarding the recourse cost.

\begin{figure*}[!ht]
        \centering
        \includegraphics[width=1.0\linewidth]{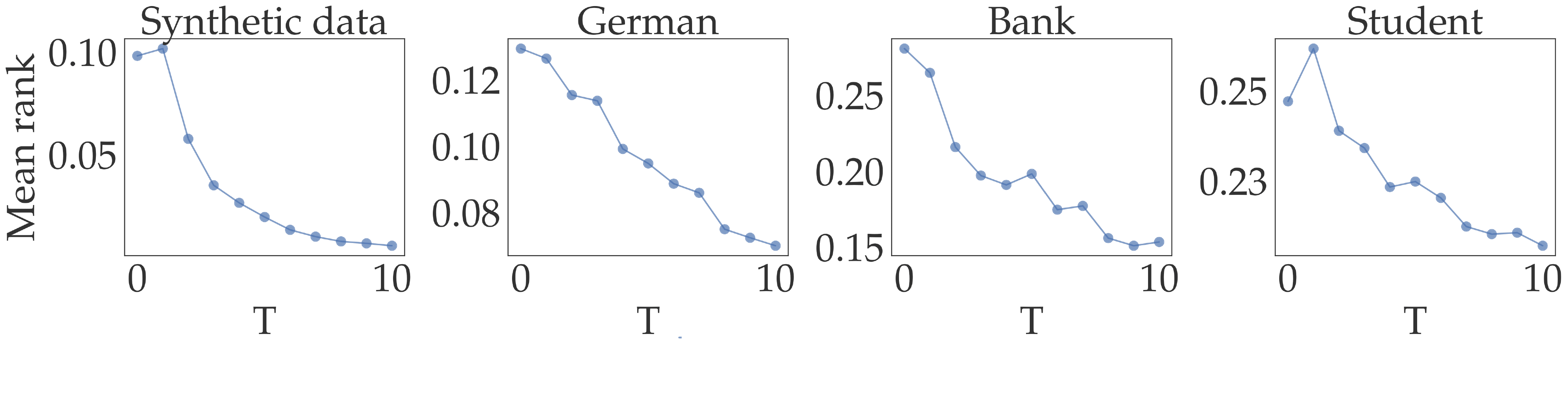}
        \caption{Impact of the number of questions $T$ to the average mean rank on synthetic data and three real-world datasets. As the number of questions increases, the mean rank tends to decrease, highlighting that the Chebyshev center tends closer to the ground truth $A_0$.}
        \label{fig:mean_rank}
\end{figure*}

\subsubsection{Comparison between Two-option and Multiple-option Questions} 
In this experiment, we compare two heuristics for choosing the questions: The recourse-pair heuristics in Section~\ref{sec:heuristics} and multiple-option heuristics in Section~\ref{sec:multi}. We denote the recourse-pair heuristics as ReAP-2 and multiple-option heuristics as ReAP-K. The setting of this experiment is the same as the setting of the experiment, leading to Figure~\ref{fig:mean_rank}.

Figure~\ref{fig:mean_rank_k} demonstrates that as $T$ increases, the mean rank of ReAP-K decreases faster than ReAP-2. Because the complexity of both heuristics is $\mathcal O(N\log(N))$, these results indicate that the multiple-option heuristic is more efficient in our adaptive preference learning framework. This is reasonable because the system gathers more information than a two-option question by asking multiple-option questions. Nevertheless, the subject has a higher cognitive load to answer multiple-option questions.

\begin{figure*}[!ht]
        \centering
        \includegraphics[width=1.0\linewidth]{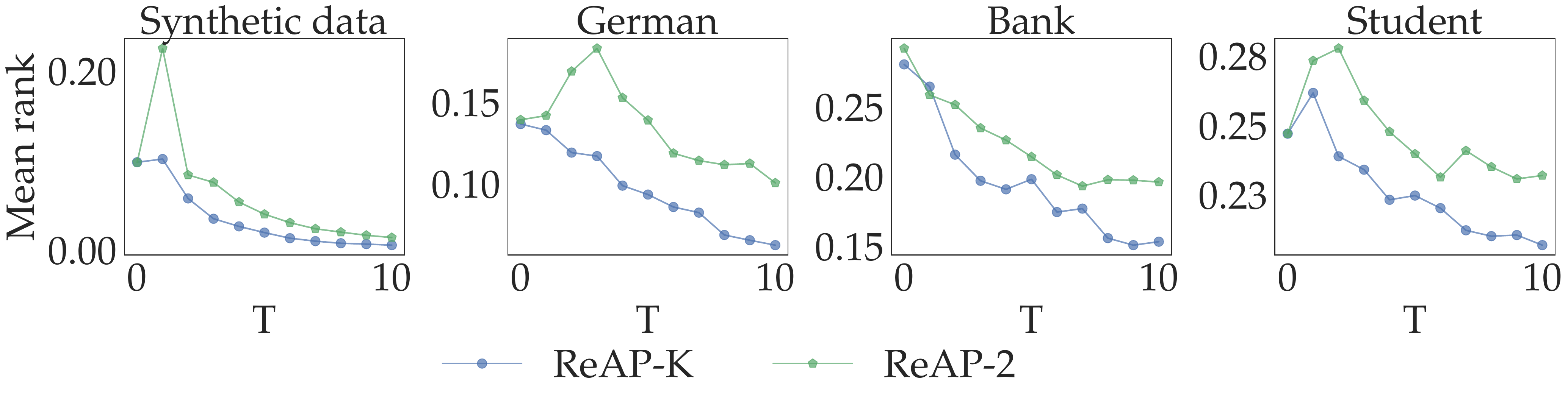}
        \caption{Comparison of two heuristics: recourse-pair heuristics (ReAP-2) and multiple-option heuristics (ReAP-K) with the average mean rank on synthetic data and three real-world datasets.}
        \label{fig:mean_rank_k}
\end{figure*}

\subsubsection{Worst-case Objective}
In this experiment, we study the effectiveness of the proposed worst-case objective in Problem~\eqref{eq:sp}. First, we conduct an experiment to measure the average path lengths for the two methods, employing $A_c\opt$ (ReAP-$A_c\opt$) or solving the worst-case objective (ReAP-WC) in Problem~\eqref{eq:sp}. Table~\ref{tab:wc-path} indicates that the path lengths obtained by solving the worst-case objective are only marginally higher than the alternative, highlighting the effectiveness of our proposed method.

\begin{table}[h!]
    \centering
    \caption{Comparison of average path lengths ($\downarrow$) between two methods on German, Bank, and Student datasets.}
    \begin{tabular}{lccc}
        \toprule
         Datasets & German &  Bank & Student \\
         \hline
         ReAP-$A_c\opt$ & 3.96 & 5.89 & 5.85 \\
         ReAP-WC & 4.14 & 6.07 & 5.91 \\ 
         \bottomrule
    \end{tabular}
    \label{tab:wc-path}
\end{table}

To solve Problem~\eqref{eq:sp} for a small graph, we enumerate all possible flows of set $\mc F$ and solve the inner maximization problem. The optimal solution is the sequential recourse with the lowest worst-case cost. This is a brute-force, exhaustive search method to solve~\eqref{eq:sp}. We can compute the difference between the optimal solutions of solving the Problem~\eqref{eq:sp} and our proposed relaxation using $\bar{w}_{ij}$ as edge cost using the Jaccard distance. The Jaccard distance is popular to measure the dissimilarity between two sets. The results reported in Table~\ref{tab:wc-enum} show that the optimal solutions do not differ significantly between the exhaustive search and the relaxed $\bar{w}_{ij}$ problem.

\begin{table}[h!]
    \centering
    \caption{Jaccard distance between exhaustive search and the relaxed $\bar{w}_{ij}$ problem on three datasets German, Bank and Student.}
    \begin{tabular}{lccc}
        \toprule
         Datasets & German &  Bank & Student \\
         \hline
       Jaccard $\downarrow$ & 0.03 $\pm$ 0.02 &  0.05 $\pm$ 0.06 & 0.05 $\pm$ 0.04 \\
         \bottomrule
    \end{tabular}
    \label{tab:wc-enum}
\end{table}

\section{Conclusions} 
This work proposes an adaptive preference learning framework for the recourse generation problem. Our proposed framework aims to approximate the true cost matrix of the subject in an iterative manner using a few rounds of question-answers. At each round, we select the question corresponding to the most effective cut of the confidence set of possible cost matrices. We provide two recourse-generation methods: gradient-based and graph-based cost-adaptive recourse. Finally, we generalize our framework to handle inconsistencies in subject responses and extend the heuristics to choose the questions from pairwise comparison to multiple-option questions. Extensive numerical experiments show that our framework can adapt to the subject's cost function and deliver strong performance results against the baselines.

\noindent\textbf{Acknowledgments.} Viet Anh Nguyen gratefully acknowledges the generous support from the CUHK’s Improvement on Competitiveness in Hiring New Faculties Funding Scheme and the CUHK's Direct Grant Project Number 4055191.

\appendix

\section{Broader Impacts and Limitations} 
This paper aims to generate recourse adapted to each subject's cost function. The gradient-based method in Section~\ref{sec:gradient} and graph-based method in Section~\ref{sec:graph} require access to gradient information and training data, respectively. We want to highlight that access to this information is leveraged in existing gradient-based methods such as ROAR~\citep{ref:upadhyay2021towards} or graph-based methods such as FACE~\citep{ref:poyiadzi2020face}. A frequent criticism of the needed access to data or model information is that it could violate the privacy of the machine learning system. Moreover, recent research demonstrates that solutions produced by recourse-generation methods and those produced by adversarial example-generating algorithms are highly comparable~\citep{ref:pawelczyk2022exploring}. To increase the system's trustworthiness, a decision-making system must, therefore, be able to discern between an adversarial example and a recourse. We may use various strategies and approaches to ensure privacy to overcome these problems. However, these issues are outside our scope, so we left these problems for future research.

\section{Additional Experiments} \label{sec:app-exp} In this section, we provide the detailed implementation and additional numerical results.

\subsection{Datasets} We present the details about the real-world and synthetic datasets.

\textbf{Real-world data.} We use seven real-world datasets which are popular in the settings of recourse-generation~\citep{ref:mothilal2020explaining, ref:upadhyay2021towards}: German credit~\citep{ref:dua2017uci}, Bank~\citep{ref:dua2017uci}, Student performance~\citep{ref:cortez2008student}, Adult~\citep{ref:becker1996adult}, COMPAS, GMC and HELOC~\citep{ref:pawelczyk2021carla}. We describe the selected subset of features from German, Bank, and Student datasets in Table~\ref{tab:features}. Additionally, we follow the same features selection procedure for Adult, COMPAS Recidivism Racial Bias, Give Me Some Credit (GMC), and HELOC datasets as in~\citet{ref:pawelczyk2021carla}. 

\begin{table}[htb]
\caption{Features selection from German, Bank, and Student datasets in our experiments.}
\label{tab:features}
\vskip 0.15in
\begin{center}
\begin{small}
\begin{tabular}{lcccr}
\toprule
Dataset & Features \\
\midrule
German & Status, Duration, Credit amount, Personal Status, Age \\
Bank & Age, Education, Balance, Housing, Loan, Campaign, Previous, Outcome\\
Student & Age, Study time, Famsup, Higher, Internet, Health, Absences, G1, G2 \\
\bottomrule
\end{tabular}
\end{small}
\end{center}
\vskip -0.1in
\end{table}

\textbf{Synthetic data.} Following previous work~\citep{ref:nguyen2022robust}, we generate the synthetic dataset with two-dimensional data samples by sampling uniformly in a rectangle $x = (x_1, x_2) \in [-2, 4] \times [-2, 7]$ with the following labeling function $f$:
\[
    f(x) = \left\{
            \begin{array}{cl}
                1 & \text{if }  x_2 \ge 1 + x_1 + 2 x_1^2 + x_1^3 - x_1^4, \\
                0 & \text{otherwise}.
            \end{array}
        \right.
\]

\subsection{Implementation Details} Now, we present the implementation details for our methods and baselines in the main paper.

\textbf{Classifier.} We train a three-layer MLP with 20, 50, and 20 nodes and a ReLU activation function in each layer for each dataset. We report the accuracy and AUC of the underlying classifier for each dataset in Table~\ref{tab:acc_clf}.


\begin{table*}[htb]
    \centering
    \caption{Accuracy and AUC of the MLP classifiers on eight datasets.}
    \label{tab:acc_clf}
    \small
    \pgfplotstabletypeset[
        col sep=comma,
        string type,
        every head row/.style={before row=\toprule,after row=\midrule},
        every row no 3/.style={after row=\midrule},
        every row no 7/.style={after row=\midrule},
        every last row/.style={after row=\bottomrule},
        columns/data/.style={column name=Dataset, column type={l}},
        columns/data/.style={column name=Synthesis, column type={l}},
        columns/data/.style={column name=German, column type={l}},
        columns/data/.style={column name=Bank, column type={l}},
        columns/data/.style={column name=Student, column type={l}},
        columns/data/.style={column name=Adult, column type={l}},
        columns/data/.style={column name=COMPAS, column type={l}},
        columns/data/.style={column name=GMC, column type={l}},
        columns/data/.style={column name=HELOC, column type={c}},
    ]{accuracy_1.csv}
\end{table*}

\textbf{Reproducibility.} For Wachter, we follow the implementation of CARLA~\citep{ref:pawelczyk2021carla}. Specifically, we initialize $\lambda=1.0$, and then employ an adaptive scheme for $\lambda$ if no valid recourse is found. In particular, if no recourse is found, we reduce the value of $\lambda$ by 0.05, similar to CARLA. Similarly, we follow the implementation of CARLA to construct a nearest neighbor graph with $K=10$ for FACE. We use the original implementation\footnote{\url{https://github.com/interpretml/DiCE}} for DiCE.

Regarding the parameter $\eps$ in our method, we experiment to study the effect of $\eps$ on the cost of final recourse. Table~\ref{tab:eps} shows that the path cost shows only minor variations across different values of $\eps$. 

\begin{table}[h!]
    \centering
    \caption{Effect of $\eps$ on the cost final recourse in German dataset.}
    \begin{tabular}{lc}
        \toprule
         Methods & Path cost \\
         \hline
       ReAP ($\eps=0.01$) & 0.53 $\pm$ 0.49 \\
        ReAP ($\eps=0.02$) & 0.55 $\pm$ 0.51 \\
       ReAP ($\eps=0.05$) &  0.51 $\pm$ 0.45 \\
        ReAP ($\eps=0.1$)) &  0.51 $\pm$ 0.45 \\
         \bottomrule
    \end{tabular}
    \label{tab:eps}
\end{table}

\subsection{Additional Numerical Results}

\subsubsection{Benchmark of Proposed Heuristics}

\textbf{Exhaustive search and similar-cost heuristics.} We compare the run time of the similar-cost heuristics and the exhaustive search for a recourse-pair question. This experiment is conducted on a machine with an i7-10510U CPU.

First, we generate $N$ 2-dimensional data samples for each value $N = 100, \ldots, 10000$. Then, for each value of $N$, we compute the average run time of two methods and report the results in Figure~\ref{fig:time}. We can observe that at $N=10000$, the exhaustive search requires more than $40$s to search for a question, which is impractical in real-world settings.

\begin{figure}[!ht]
        \centering
        \includegraphics[width=0.5\linewidth]{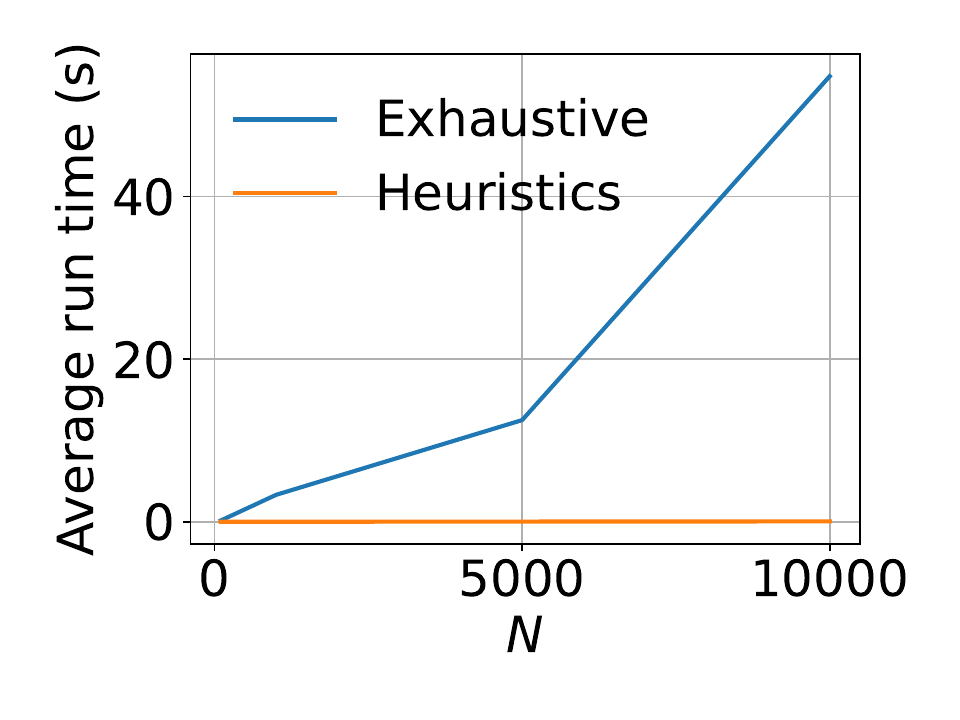}
        \caption{Run time comparison of two heuristics: recourse-pair heuristics (ReAP-2) and multiple-option heuristics (ReAP-K) with the average mean rank on four datasets.}
        \label{fig:time}
\end{figure}

\begin{table}[H]
\caption{The suboptimality gap between the objective of exhaustive search and similar-cost heuristics with inconsistency threshold $\gamma=0.01$ in four datasets.}
\label{tab:heuristics-cons}
\vskip 0.15in
\begin{center}
\begin{small}
\begin{tabular}{lcccr}
\toprule
& Synthetic & German & Bank & Student \\
\midrule
Gap & 0.001 & 0.017 & 0.021 & 0.029 \\
\bottomrule
\end{tabular}
\end{small}
\end{center}
\vskip -0.1in
\end{table}

Let $\mathrm{obj}_{e}$ and $\mathrm{obj}_{h}$ be the optimal values for exhaustive search and similar-cost heuristics, respectively. We compare the relative suboptimality gap between the objective of those two methods as the following:
\[
\mathrm{gap}(\mathrm{obj}_{e}, \mathrm{obj}_{h})=\frac{|\mathrm{obj}_{e} - \mathrm{obj}_{h}|}{\mathrm{obj}_{e}}.
\]
The experiment results show that the suboptimality gap between the objective of the two methods is approximately of order $10^{-6}$ in all datasets. These results demonstrate empirically that our heuristic method can generate good solutions to the original problem at a fraction of the computational time.


\begin{table}[h!]
    \centering
    \caption{Comparison of mean rank ($\downarrow$) between the random selection strategy and our proposed heuristics.}
    \begin{tabular}{lcccc}
        \toprule
         Datasets & Methods &   $T=0$ & $T=5$ & $T=10$ \\
         \hline
       German & Random & 0.26 & 0.24 & 0.15 \\
        & Ours & 0.13 & 0.10 &  0.07 \\
        \hline
        Bank & Random & 0.42 &  0.31 & 0.19 \\
        & Ours &  0.27 & 0.20 &  0.15 \\
        \hline
        Student & Random & 0.37 & 0.31 &  0.29 \\
        & Ours & 0.25 & 0.23 &  0.21 \\
         \bottomrule
    \end{tabular}
    \label{tab:heuristics-rand}
\end{table}

\textbf{Heuristics to address human inconsistencies.} To account for similarity-dependent uncertainty, we can adapt our heuristics by taking into consideration only an adjacent pair of $([i], [i+1])$ for $i \in \llbracket N \rrbracket$ if the disparity between their costs is larger than an inconsistency threshold, denoted as $\gamma$.

We compare the objective values, in terms of their difference, of those two methods in Table~\ref{tab:heuristics-cons}. These results demonstrate that the proposed heuristic method can generate reasonable solutions to the original problem at a fraction of the computational time compared to the exhaustive search.

\textbf{Comparison with random selection strategy.} We show the comparison of mean rank between the random strategy to select the recourse pairs and the proposed heuristics for $T=0$, $T=5$, and $T=10$ in Table~\ref{tab:heuristics-rand}. These results show that as $T$ increases, the mean rank of our approach decreases faster than the random strategy, indicating that the proposed heuristic is more efficient in our adaptive preference learning framework.

\begin{table}[ht]
    \centering
    \caption{Benchmark of Cost and Validity between gradient-based methods on four datasets.}
    \label{tab:gd-exp-supp}
    \pgfplotstabletypeset[
        col sep=comma,
        string type,
        every head row/.style={before row=\toprule,after row=\midrule},
        every row no 2/.style={after row=\midrule},
        every row no 5/.style={after row=\midrule},
        every row no 8/.style={after row=\midrule},
        every last row/.style={after row=\bottomrule},
        columns/dataset/.style={column name=Dataset, column type={l}},
        columns/method/.style={column name=Methods, column type={l}},
        columns/cost/.style={column name=Cost $\downarrow$, column type={c}},
        columns/valid/.style={column name=Validity $\uparrow$, column type={c}},
    ]{mlp_gd_supp.csv}
\end{table}

\begin{table}[ht]
    \centering
    \caption{Benchmark of Path cost between graph-based ReAP and FACE. All methods attain the validity of $1.00\pm0.00$. Thus, we do not display Validity in the table.}
    \label{tab:graph-exp-supp}
    \pgfplotstabletypeset[
        col sep=comma,
        string type,
        every head row/.style={before row=\toprule,after row=\midrule},
        every row no 1/.style={after row=\midrule},
        every row no 3/.style={after row=\midrule},
        every row no 5/.style={after row=\midrule},
        every last row/.style={after row=\bottomrule},
        columns/dataset/.style={column name=Dataset, column type={l}},
        columns/method/.style={column name=Methods, column type={l}},
        columns/cost/.style={column name=Path cost $\downarrow$, column type={c}},
    ]{mlp_graph_supp.csv}
\end{table}

\subsubsection{Results on More Datasets}
 Table~\ref{tab:gd-exp-supp} and Table~\ref{tab:graph-exp-supp} report the additional numerical results for four datasets available in CARLA~\citep{ref:pawelczyk2021carla}, including Adult, COMPAS, GMC, and HELOC. These results demonstrate that our method outperforms other baselines, effectively adjusts to the subject’s cost function, and adequately reflects the individual subject’s preferences.

\section{Motivation for the Mahalanobis Cost Function} \label{sec:motivation-M}

We provide two arguments to support the choice of the Mahalanobis cost function. The first argument involves a control theory viewpoint, while the second argument is the connection with the structural causal model.

\subsection{Linear Quadratic Regulator Cost}
\label{sec:LQR}
In this section, we describe a sequential control process that affects feature transitions of a subject $x_0$ towards a target feature while minimizing the cost of efforts. Let $x_0$ and $x_{r}$ be the initial feature of the subject and the target feature. We consider a discrete-time system that, at each iteration, an input effort $u^{(t)}$ drives $x^{(t)}$ to $x^{(t+1)}$
\[
    x^{(t+1)} = x^{(t)} + u^{(t)}, \quad x^{(0)} = x_0.
\]
The objective is to finding the best input efforts $u^{(t)}$ $(\forall t = 0, \ldots, \infty)$ to move from $x_0$ toward $x_r$.   
One can formulate this as solving a Linear Quadratic Regulator (LQR) problem of the form:
\[
    c(x_0, x_r) = \left\{
    \begin{array}{cl}
        \min & \sum_{t=0}^\infty (x^{(t)} - x_r)^\top Q (x^{(t)} - x_r) + (u^{(t)}) ^\top R u^{(t)} \\
        \st & u^{(t)} \in \R^d \quad \forall t = 0, \ldots, \infty\\
        & x^{(t+1)} = x^{(t)} + u^{(t)} \quad \forall t \\
        & x^{(0)} = x_0,
    \end{array}
    \right.
\]
where the parameters $Q$ and $R$ are the subject's state cost and input cost matrices, respectively. The matrix $Q$ is positive semidefinite symmetric while $R$ is positive definite symmetric. The value $c(x_0, x_r)$ is the cost to implement the recourse $x_r$.
\begin{proposition}[Quadratic cost] \label{prop:LQR}
    The optimal cost function $c(x_0, x_r)$ is quadratic, that is:
    \begin{equation*}
        c(x_0, x_r) = (x_0 - x_r)^\top A_0 (x_0 - x_r),
    \end{equation*}
    where $A_0$ is a positive definite symmetric matrix satisfying the following equation:
    \begin{equation*}
            Q = A_0 (R + A_0)^{-1} A_0.
    \end{equation*}
\end{proposition}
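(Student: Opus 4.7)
The plan is to invoke the standard dynamic programming machinery for the Linear Quadratic Regulator, which is well-established in control theory (see, e.g., \citet{bertsekas2012dynamic}), adapted to the shifted-target setting considered here. First, I would make the change of variables $y^{(t)} = x^{(t)} - x_r$ to center the problem at the origin. Since the dynamics are $x^{(t+1)} = x^{(t)} + u^{(t)}$, the shifted dynamics read $y^{(t+1)} = y^{(t)} + u^{(t)}$ with initial condition $y^{(0)} = x_0 - x_r$, and the objective becomes the classical discrete-time infinite-horizon LQR cost
\[
\sum_{t=0}^{\infty} (y^{(t)})^\top Q\, y^{(t)} + (u^{(t)})^\top R\, u^{(t)},
\]
with state transition matrix $A = I$ and input matrix $B = I$.

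Next, I would postulate a quadratic value function $V(y) = y^\top A_0\, y$ for some symmetric positive definite $A_0$ and plug this ansatz into the Bellman equation
\[
V(y) = \min_{u \in \R^d}\,\bigl\{ y^\top Q\, y + u^\top R\, u + V(y + u) \bigr\}.
\]
The inner minimization is an unconstrained strictly convex quadratic in $u$; solving for the minimizer yields $u^\star = -(R + A_0)^{-1} A_0\, y$. Substituting back and matching coefficients of $y^\top(\cdot)y$ on both sides produces the fixed-point identity
\[
A_0 = Q + A_0 - A_0 (R + A_0)^{-1} A_0,
\]
which simplifies exactly to the postulated discrete algebraic Riccati equation $Q = A_0 (R + A_0)^{-1} A_0$. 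Setting $y = x_0 - x_r$ then gives the announced closed form $c(x_0, x_r) = (x_0 - x_r)^\top A_0 (x_0 - x_r)$.

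To make this rigorous rather than merely formal, I would appeal to the standard LQR existence-and-uniqueness result: because the pair $(A, B) = (I, I)$ is trivially stabilizable and $(A, \sqrt{Q})$ is detectable (any deviation accumulates unbounded state cost otherwise), the DARE admits a unique stabilizing positive semidefinite solution $A_0$, and the linear feedback $u^\star = -(R + A_0)^{-1} A_0\, y$ is optimal among all admissible control sequences. Positive definiteness of $A_0$ follows from $R \succ 0$ together with the structure of the Riccati identity (or, equivalently, from the fact that any nonzero initial deviation incurs strictly positive cost). The main technical obstacle is really this verification step, namely confirming that the quadratic ansatz is not only a fixed point of the Bellman operator but is the true value function and that the resulting $A_0$ is positive definite; I would dispatch this by citing the standard LQR theorem rather than reproving the verification lemma from scratch.
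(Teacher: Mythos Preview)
Your proposal is correct and follows essentially the same route as the paper: shift variables by $x_r$ to reduce to a standard infinite-horizon LQR, posit a quadratic value function, solve the Bellman minimization to obtain $u^\star = -(R+A_0)^{-1}A_0 y$, and substitute back to derive the Riccati identity $Q = A_0(R+A_0)^{-1}A_0$. Your treatment is, if anything, slightly more careful about the verification step (stabilizability/detectability and positive definiteness of $A_0$) than the paper's own argument.
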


Proposition~\ref{prop:LQR} asserts that the minimal cost function has the Mahalanobis form, which solely relies on the initial input $x_0$ and the target features $x_r$. 
\begin{proof}[Proof of Proposition~\ref{prop:LQR}]
    Because $x_r$ is a fixed vector, use the following change of variables $z^{(t)} \leftarrow x^{(t)} - x_r $, we have the equivalence
    \[
        c(x_0, x_r) = \left\{
        \begin{array}{cl}
            \min & \sum_{t=0}^\infty (z^{(t)})^\top Q z^{(t)} + (u^{(t)}) ^\top R u^{(t)} \\
            \st & u^{(t)} \in \R^d \quad \forall t = 0, \ldots, \infty\\
            & z^{(t+1)} = z^{(t)} + u^{(t)} \quad \forall t \\
            & z^{(0)} = x_0 - x_r.
        \end{array}
        \right.
    \]
Let $V(z)$ be the minimum LQR cost-to-go, starting from state $z$ as follows:
\[
    V(z) = \left\{
        \begin{array}{cl}
            \min & \sum_{t=0}^\infty (z^{(t)})^\top Q z^{(t)} + (u^{(t)})^\top R u^{(t)} \\
            \st & u^{(t)} \in \R^d \quad \forall t = 1, \ldots, \infty\\
            & z^{(t+1)} = z^{(t)} + u^{(t)} \quad \forall t \\
            & z^{(0)} = z.
        \end{array}
        \right.
\]
According to~\citet[Section 4.1]{bertsekas2012dynamic}, the function $V$ has a quadratic form $V(z) = z^\top A_0 z$, for some symmetric matrix $A_0$. Because $Q$ is a positive semidefinite symmetric matrix and $R$ is a positive definite symmetric matrix, $V(z) > 0$ for all $z \in \R^d$, meaning that $A_0$ is a positive definite symmetric matrix. Substituting $\sum_{t=1}^\infty (z^{(t)})^\top Q z^{(t)} + (u^{(t)})^\top R u^{(t)}$ by $V(z + u^{(0)})$, we have:
\[
V(z) = \min_{u^{(0)}}~z^\top Q z + (u^{(0)})^\top R u^{(0)} + V(z + u^{(0)}),
\]
which implies that 
\[
z^\top A_0 z = \min_{u^{(0)}} z^\top Q z + (u^{(0)})^\top R u^{(0)} + (z + u^{(0)})^\top A_0 (z + u^{(0)}).
\]
It is easy to see that the objective function of the right-hand side optimization problem is convex. Therefore, the optimal solution of $u^{(0)}$ satisfies
\[
    R u^{(0)} + A_0 (z + u^{(0)}) = 0
 \implies  u^{(0)*} = -(R + A_0)^{-1} A_0 z .
\]
Here, $(R + A_0)$ is invertible because $R$ and $A_0$ are positive definite matrices. Then we have:
\begin{align*}
&z^\top A_0 z = z^\top Q z + (u^{(0)*})^\top R u^{(0)*} + (z + u^{(0)*})^\top A_0 (z + u^{(0)*}) \\
&\Leftrightarrow z^\top A_0 z =  z^\top (Q + A_0 - A_0 (R + A_0)^{-1} A_0) z.
\end{align*}
Therefore, the matrix $A_0$ needs to satisfy the following condition: 
\[
A_0 = Q + A_0 - A_0 (R + A_0)^{-1} A_0
\Leftrightarrow Q = A_0 (R + A_0)^{-1} A_0.
\]
This completes our proof.
\end{proof}

\begin{remark}[Finite time horizon]
    The argument in this section relies on an \textit{in}finite horizon control problem to simplify the discussion. One can formulate a similar finite horizon problem, which leads to a similar Mahalanobis form. The proof follows from an induction argument, which is standard in the control theory literature; see~\citet{bertsekas2012dynamic}.
\end{remark}

\subsection{Casual Graph Recovery}
\label{sec:causal}
This section discusses the connection between the linear Gaussian structural causal model and the Mahalanobis cost function. We consider a linear Gaussian structural equation model (SEM) for the deviation $\delta \in \R^d$ from the initial input $x_0 \in \R^d$ as follows:
\begin{align}
\delta \sim SEM(W_0, D_0)
\Leftrightarrow \delta = W_0 \delta + \epsilon,
\label{eq:delta}
\end{align}
where $\epsilon \sim \mathcal{N}(0, D_0)$ is a multivariate Gaussian with mean vector zero and a covariance matrix $D_0 \in \PD^d$. The $W_0 \in \R^{d \times d}$ is equivalent to the weight $w$ of the structural causal model (SCM) for cost formulation from a directed acyclic graph (DAG) $G$. Each node of $G$ is associated with a single feature, and a nonzero entry $(W_0)_{i, j}$ corresponds to a causal relationship from node $j$ to node $i$. The SEM~\eqref{eq:delta} implies that:
\begin{align*}
    \delta \sim \mathcal{N}(0, (I - W_0)^{-1} D_0 (I - W_0)^{-\top}),
\end{align*}
where $I$ is the identity matrix. The density function for $\delta$ is:
\[
f_0(\delta) = \frac{1}{(2\pi)^{d/2} |(I - W_0)^{-1} D_0 (I - W_0)^{-\top}|^{1/2}} \exp\left(-\frac{1}{2} \delta^{\top} (I-W_0)^{\top} D_0^{-1} (I - W_0) \delta\right). 
\]
Between two deviations $\delta_i = x_i - x_0$ and $\delta_j = x_j - x_0$, the subject prefers a deviation with a higher likelihood, and thus $\delta_i$ is preferred to $\delta_j$ if
\[
\delta_i^{\top} (I-W_0)^{\top} D_0^{-1} (I - W_0) \delta_i \le \delta_j^{\top} (I-W_0)^{\top} D_0^{-1} (I - W_0) \delta_j.
\]
We recover the Mahalanobis cost preference model with $A_0$ corresponding to the precision matrix of the deviation under the linear Gaussian structural equation model. Specifically, the value of $A_0$ is computed as
\[
A_0 = (I-W_0)^{\top} D_0^{-1} (I - W_0).
\]

\end{document}